\documentclass{article}
\usepackage{microtype}
\usepackage{graphicx}
\usepackage{subfigure}
\usepackage{booktabs}
\usepackage{float}
\usepackage{amsmath,amsthm,amssymb}
\usepackage{mathtools}
\usepackage{tcolorbox}
\usepackage{enumitem}
\usepackage{chngcntr}
\usepackage{hyperref}
\usepackage{placeins}
\usepackage[capitalize,noabbrev]{cleveref}
\usepackage[textsize=tiny]{todonotes}

\usepackage[accepted]{icml2026}
\usepackage{fontawesome5}


\newtheorem{theorem}{Theorem}[section]
\newtheorem{lemma}[theorem]{Lemma}
\newtheorem{definition}[theorem]{Definition}
\newtheorem{assumption}[theorem]{Assumption}
\newtheorem{corollary}[theorem]{Corollary}

\newtheorem{remark}[theorem]{Remark}

\icmltitlerunning{When Does Adaptation Win? Scaling Laws for Meta-Learning in Quantum Control}

\begin{document}
\twocolumn[
\icmltitle{When Does Adaptation Win? Scaling Laws for Meta-Learning in Quantum Control}

\icmlkeywords{quantum control, meta-learning, continuous-time systems}

\begin{icmlauthorlist}
\icmlauthor{Nima Leclerc}{mitre1}
\icmlauthor{Chris Miller}{mitre1}
\icmlauthor{Nicholas Brawand}{mitre1}
\end{icmlauthorlist}

\icmlaffiliation{mitre1}{Quantum Information Sciences, Optics, and Imaging Department, The MITRE Corporation, 7525 Colshire Dr, McLean, VA, USA 22102}
\icmlcorrespondingauthor{Nima Leclerc}{nleclerc@mitre.org}

\icmlkeywords{Meta-Learning, Reinforcement Learning, Quantum Control, Control Theory}

\vskip 0.3in
] 
\printAffiliationsAndNotice{}

\begin{abstract}
Quantum hardware suffers from intrinsic device heterogeneity and environmental drift, forcing practitioners to choose between suboptimal non-adaptive controllers or costly per-device recalibration. We derive a scaling law lower bound for meta-learning showing that the adaptation gain (expected fidelity improvement from task-specific gradient steps) saturates exponentially with gradient steps and scales linearly with task variance, providing a quantitative criterion for when adaptation justifies its overhead. Validation on quantum gate calibration shows negligible benefits for low-variance tasks but $>40\%$ fidelity gains on two-qubit gates under extreme out-of-distribution conditions (10$\times$ the training noise), with implications for reducing per-device calibration time on cloud quantum processors. Further validation on classical linear-quadratic control confirms these laws emerge from general optimization geometry rather than quantum-specific physics. We further introduce a few-shot pre-adaptation protocol that estimates the optimal adaptation budget from $N{=}3$--5 probe steps within 3--19\% relative error across out-of-distribution regimes. \noindent\href{https://github.com/nimalec/metalearning-quantum-control}{\faGithub\ Code.}
\end{abstract}

\section{Introduction}
\label{sec:1}
\begin{figure*}[ht!]
    \centering
    \includegraphics[width=1\linewidth]{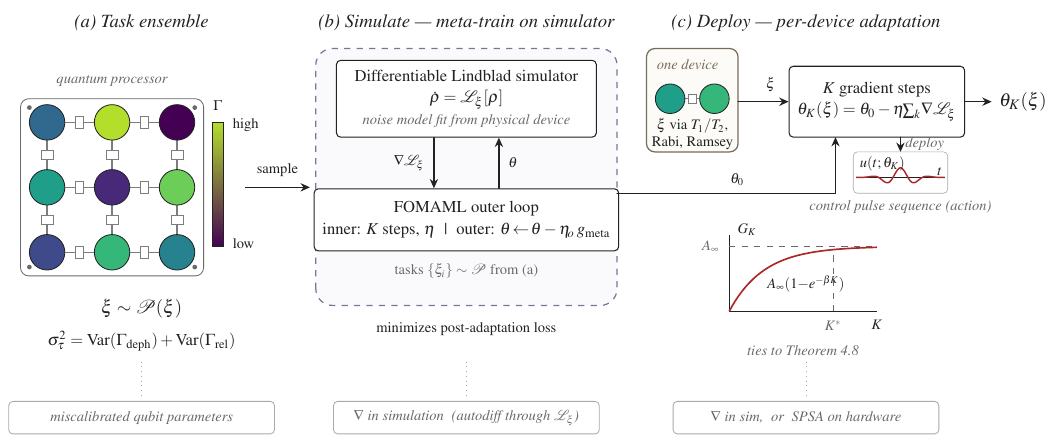}
    \caption{\textbf{Simulate-then-deploy workflow for meta-learned quantum control.}
    (a)~A heterogeneous processor of transmons with noise rates
    $\xi = (\Gamma_{\mathrm{deph}}, \Gamma_{\mathrm{relax}}) \sim \mathcal{P}(\xi)$;
    heterogeneity is summarized by
    $\sigma_\tau^{2} = \mathrm{Var}(\Gamma_{\mathrm{deph}}) + \mathrm{Var}(\Gamma_{\mathrm{rel}})$.
    (b)~A differentiable Lindblad simulator ($\dot{\rho} = \mathcal{L}_{\xi}[\rho]$),
    with noise model fit from device characterization, is meta-trained via first-order model-agnostic meta-learning (FOMAML) on
    tasks $\{\xi_i\} \sim \mathcal{P}$ to yield initialization $\theta_0$. $\eta_0$ and $g_{\text{meta}}$ are the outer loop learning rate and meta-gradient. 
    (c)~At deployment, relaxation and dephasing times ($T_1$ and $T_2$) via Rabi and Ramsey  characterization supplies $\xi$;
    $K$ gradient steps produce $\theta_K(\xi)$ parameterizing the emitted control
    pulse $u(t;\theta_K)$. The saturation curve $G_K = A_{\infty}(1 - e^{-\beta K})$
    (inset) characterizes the adaptation gain, with $K^{*}$ marking diminishing returns.}
    \label{fig:1new}
\end{figure*}
Quantum processors face a persistent calibration bottleneck: current devices could easily require 2 hours of calibration daily for a 100-qubit system~\citep{maksymov2024quantum}, with qubit relaxation time ($T_1$) variations of 20--60 $\mu$s ~\citep{burnett2019decoherence} and time-dependent noise sources ~\citep{Berritta2025PRXQuantum}, necessitating frequent recalibration to maintain a desired gate \emph{fidelity} (how closely a gate matches its ideal implementation) ~\citep{Krinner2020}. Calibration involves optimizing the microwave or laser control pulses that implement quantum gates, a process that must be repeated as system 
parameters drift.

As quantum computers scale to thousands of qubits, this overhead becomes  prohibitive. For example, IBM's quantum systems require daily calibrations 
lasting 30--90 minutes depending on system size, with additional hourly recalibrations~\citep{IBMQuantumCalibration}.  

Practitioners face two broad strategies for managing device heterogeneity. Standard approaches like Gradient Ascent Pulse Engineering  (GRAPE)~\citep{khaneja2005optimal} optimize a single robust or non-adaptive pulse sequence for assumed noise conditions but require accurate noise models and lengthy re-optimization when those assumptions fail. Adaptive approaches have emerged more recently: reinforcement learning (RL) methods learn device-specific control policies ~\citep{Bukov2018, Niu2019, Ernst2025} but can be sample-inefficient, 
while meta-learning frameworks like metaQctrl~\citep{zhang2025meta} 
demonstrate that learned initializations can improve robustness under parameter uncertainty. However, existing work does not characterize \emph{when} adaptation outperforms non-adaptive deployment  or \emph{how} adaptation gains scale with task variance. We provide this characterization.  Throughout this paper, we refer to a  \emph{task} $\xi$ as a distinct calibration  instance, characterized by device-specific noise rates, coherence times,  or coupling strengths that vary due to fabrication differences or environmental drift. 
Figure~\ref{fig:1new} previews our framing: heterogeneity across devices~(a) is absorbed into a meta-initialization trained on a differentiable simulator~(b), then specialized per-device at deployment~(c).

Similar tradeoffs arise in robotics and fusion control~\citep{seo2024avoiding,chen2025datadriven}, 
but quantum systems offer unique advantages as a theoretical testbed: the  dynamics are exactly specified by the Lindblad master equation \citep{GKS1976, Lindblad1976}, and controllability is analytically characterized ~\citep{lietheoretic2025}, providing ground-truth dynamics for evaluating any control policy  under arbitrary environmental conditions. We assume that noise statistics are stationary; non-stationary control drift within an episode would require online adaptation extensions. 

Crucially, universal quantum computation decomposes into single-qubit rotations and two-qubit entangling gates ~\citep{NielsenChuang2010}. Any quantum algorithm,  regardless of circuit depth or qubit count, reduces to repeated application of these primitive operations. The calibration bottleneck therefore scales with processor size: a 100-qubit processor requires calibrating approximately 100 single-qubit gate sets and $O(n)$ two-qubit couplers~\citep{Arute2019}, each facing device-specific drift and noise variation. Our scaling laws address 
precisely this operational challenge, providing quantitative criteria 
for when per-gate adaptation justifies its overhead. Figure~\ref{fig:1}(a) illustrates this tradeoff geometrically. In control parameter space, the non-adaptive controller (designed to be robust to the average noise condition) $\theta_{\mathrm{rob}}^*$ (star) minimizes the expected loss across all tasks but lies away from the  task-specific optima (red and green dots). Gradient-based adaptation traces  distinct trajectories from this shared initialization by taking $K$ steps toward each device's optimum, yielding task-specific pulse sequences (colored red and green, corresponding to each task). The  question is whether these $K$ gradient steps justify their computational cost.

We derive scaling laws for the \emph{adaptation gap}, defined as the expected fidelity improvement when adapting a meta-learned initialization $\theta_0$ with $K$ steps to a specific device parameterized by $\xi$: $G_K = \mathbb{E}_\xi[L_\xi(\theta_0) - L_\xi(\theta_K)]$, where $L_\xi (\theta)$ denotes the policy's loss function for weights $\theta$ for task $\xi$. This gap satisfies
$G_K \geq c \sigma^2_\tau(1 - e^{-\beta K})$, linking task 
variance ($\sigma^2_\tau$), adaptation rate ($\beta$), and gradient 
steps ($K$) to gains over the robust baseline. \newline \newline 
\textbf{Key assumption.} Our analysis requires the Polyak-\L{}ojasiewicz (PL) condition~\citep{karimi2016pl} locally near optima, guaranteeing that gradient steps yield predictable improvement in the loss. For closed quantum systems, trap-free landscapes~\citep{russell2017control} (no local minima other than the global optimum) suggest favorable optimization geometry; for open systems, we validate the PL condition empirically (Figure~\ref{fig:2} (a)), with PL satisfaction nearly universal at moderate thresholds across in-distribution (ID) and strong out-of-distribution (OOD) regimes; even when the strict threshold is violated, the exponential fit on $G_K$ retains $R^2 > 0.95$ (Appendix~A.7). This assumption implies that the meta-initialization is already near the basin of attraction. The resulting adaptation gap satisfies $G_K \geq A_\infty(1 - e^{-\beta K})$, with a polynomial-decay generalization under the weaker Kurdyka--{\L}ojasiewicz (KL) condition (Appendix~C.2). As illustrated in Figure~\ref{fig:1}(b), the gap exhibits exponential saturation at rate $\beta = \eta\mu$, where $\eta$ is the inner-loop learning rate~\citep{finn2017maml} and $\mu$ characterizes the local landscape curvature, with asymptote $A_\infty \propto \sigma_\tau^2$ scaling with device-to-device variability.
We validate our predictions on single-qubit gates and two-qubit entangling gates, demonstrating strong quantitative agreement ($R^2 \geqslant 0.98$), which provides empirically-validated guidelines for when adaptation provides meaningful benefit over a fixed controller in gate calibration.  

\subsection*{Contributions}
\label{sec:contributions} 
\begin{itemize}
\item \textbf{Scaling laws for quantum gate calibration.} We derive the bound $G_K \geq A_\infty (1 - e^{-\beta K})$ on the adaptation gap, with a polynomial-decay analogue under the KL condition. Here $\beta = \eta\mu$ captures adaptation rate via step size ($\eta$) and curvature ($\mu$); $A_\infty$ quantifies device variability.
\item \textbf{Validation on universal gate primitives.} On differentiable quantum simulators, exponential saturation holds for both single-qubit $X$ gates and two-qubit CZ gates, with $>$40\% fidelity improvement under high-noise OOD conditions. These form a universal gate set~\citep{barenco1995elementary}.
\item \textbf{Operational decision protocol.} A few-shot procedure estimates the $95\%$-gain budget $\hat K_{0.95}$ within $3$--$19\%$ across $2\times$--$5\times$ OOD regimes. A companion ablation shows adaptation alone (without $\xi$) drives most of the recovery under shift (Section~\ref{sec:fewshot}).
\end{itemize}  
  
We position the framework as a \emph{simulate-then-deploy} paradigm (Figure~\ref{fig:1new}): scaling laws are characterized in simulation, and the meta-initialization is deployed to hardware where on-device adaptation can use gradient-free estimators (e.g., Simultaneous Perturbation Stochastic Approximation (SPSA))~\citep{alexeev2025quantum,yang2019silicon}. Together, these provide quantitative criteria for calibration strategy: when $\sigma_\tau^2$ is high and $K$ is sufficient, adaptation outperforms a fixed controller.  

\section{Related Work}
\label{sec:2}
\textbf{Meta-learning.} Meta-learning algorithms  such as model-agnostic meta-learning (MAML)~\citep{finn2017maml} and its first-order variant (FOMAML) ~\citep{nichol2018} learn initializations  enabling per-task adaptation. Theoretical analyses establish  convergence guarantees under smoothness and PL  conditions~\citep{karimi2016pl, fallah2020convergence} but do not  address \emph{when} adaptation outperforms non-adaptive policies or how  gains scale with task variance. \citep{deleu2018effects} show that 
adaptation can \emph{decrease} performance when task variance is limited, highlighting the need for predictive criteria: the gap our scaling laws address.
\begin{figure}[ht!]
    \centering
\includegraphics[width=1\linewidth]{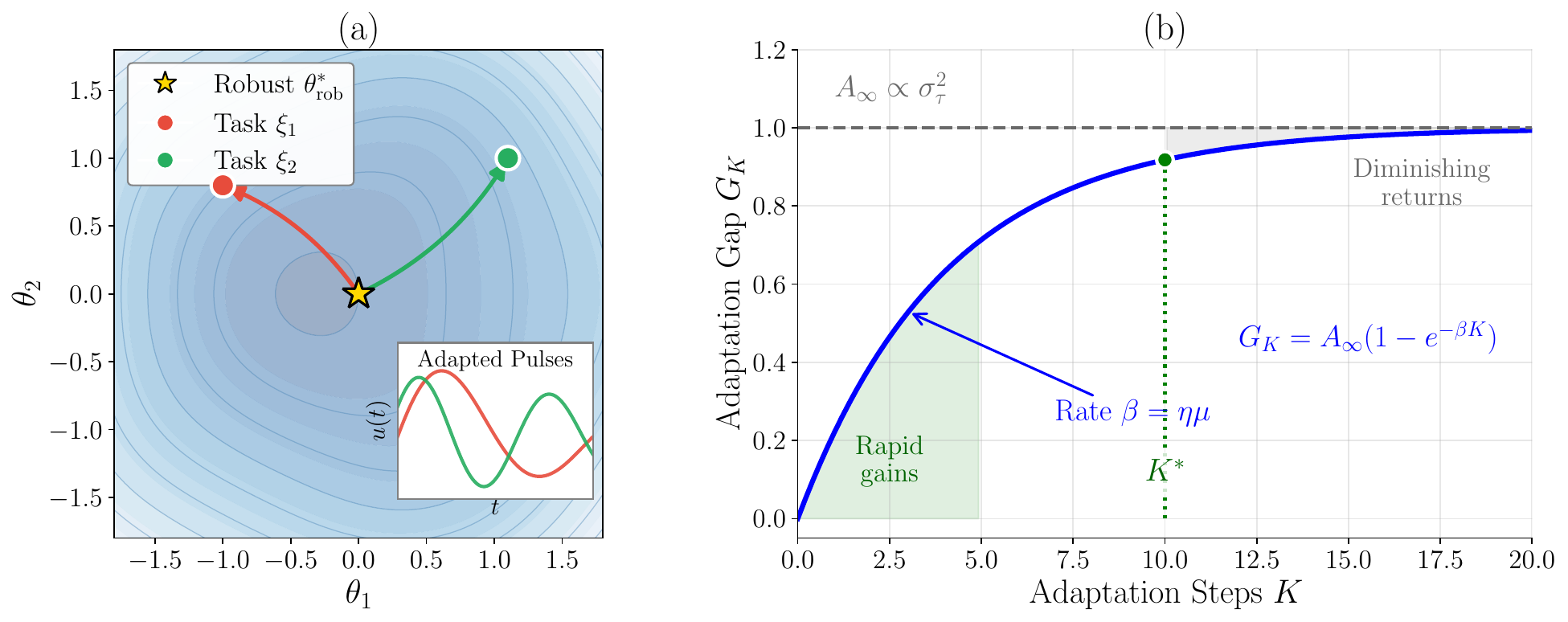}  
\caption{\textbf{Scaling law. } (a) In the loss landscape over control parameters $\theta_1$ and $\theta_2$, the robust (non-adaptive) controller ($\theta^\star_{\text{rob}}$, star) minimizes average loss but is suboptimal for individual devices (colored dots); gradient-based adaptation traces distinct trajectories toward device-specific optima, yielding different controls $u(t)$ (inset). (b) The adaptation gap follows $G_K = A_\infty(1 - e^{-\beta K})$, where $\beta = \eta\mu$ captures adaptation rate and $A_\infty \propto \sigma^2_\tau$ scales with task variance. $K^*$ marks diminishing returns.}\label{fig:1}
\end{figure}

\textbf{Task landscape conditions.} Recent analysis of MAML~\citep{collins2022task} characterizes when it outperforms non-adaptive learning, showing that gains require sufficient task diversity and geometric separation between optima. Our work builds on this by deriving quantitative scaling laws predicting the \emph{magnitude} of adaptation gains as a function of task variance $\sigma^2_\tau$, where $\sigma^2_\tau$ corresponds to measurable 
device heterogeneity.

\textbf{Quantum control.} GRAPE~\citep{khaneja2005optimal} optimizes 
pulses via differentiable dynamics but provides no finite-time adaptation 
guarantees; each new device requires re-optimization from scratch. 
RL approaches~\citep{Bukov2018, Niu2019, Ernst2025} 
learn device-specific policies but lack theoretical guarantees on when 
adaptation outperforms non-adaptive baselines. At the circuit level, 
AlphaTensor-Quantum~\citep{ruiz2024quantum} applies RL to minimize 
T-gate counts, demonstrating learned optimization can outperform 
hand-designed heuristics; however, this operates on discrete gate 
sequences rather than continuous pulses. Recently, Zhang~\emph{et al.}\ 
introduced metaQctrl~\citep{zhang2025meta}, achieving 99.99\% fidelity 
under parameter uncertainty, but without scaling laws predicting 
\emph{when} or \emph{by how much} adaptation helps. Our work is complementary in that we derive explicit scaling laws that quantify these relationships as functions of  gradient steps $K$, task variance $\sigma_\tau^2$, and landscape 
curvature $\mu$.

\section{Problem Formulation}
\label{sec:3}

We now formalize the continuous-time control setting and define the  \emph{adaptation gap}.   
\subsection{System Dynamics and Control Objective}
\label{ssec:3.1}

\emph{Dynamical system.} We consider a family of differentiable control 
systems indexed by task parameter $\xi \in \mathbb{R}^n$:
\begin{equation}
    \dot{x}(t) = f_\xi(x(t), u(t; \theta)),
    \label{eq:dynamics}
\end{equation}
where $x(t)$ is the system state, $u(t; \theta)$ is the control signal 
parameterized by $\theta$, and $f_\xi$ is a task-dependent dynamical 
linear operator.   Each task $\xi$ specifies an environment instance drawn from 
distribution $\mathcal{P}(\xi)$. For example, device-specific relaxation time $T_1$ or device coupling $J$. 

\begin{figure*}[ht!]
    \centering
    \centerline{\includegraphics[width=\textwidth]{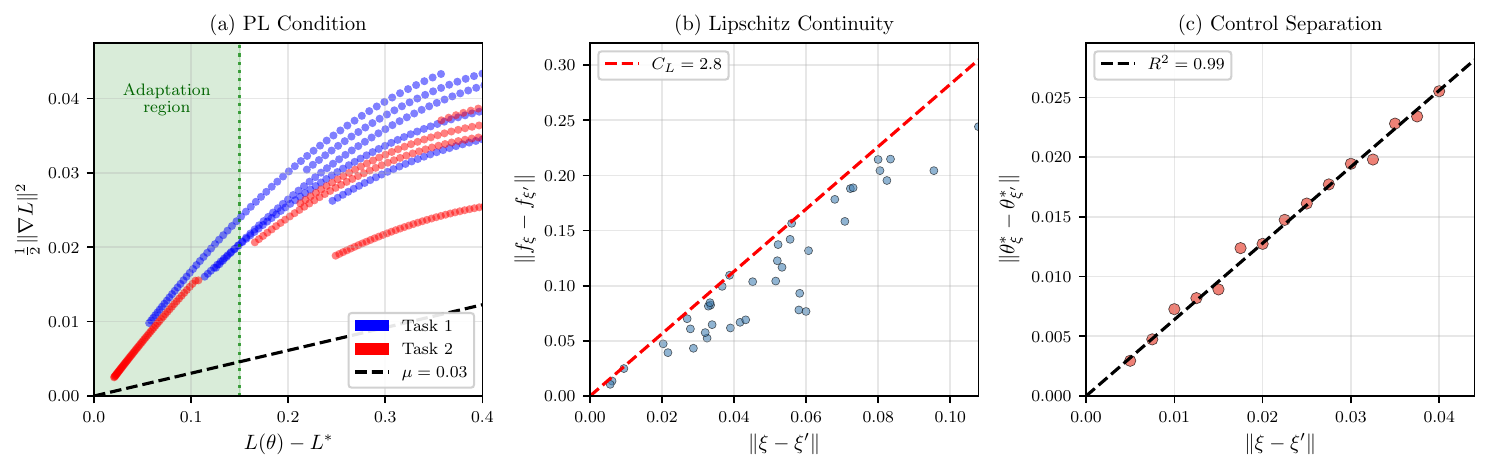}} 
    \caption{ \textbf{Theory Validation}
(a) {PL condition: Optimization trajectories for two representative 
tasks (different colors) plotting gradient norm $\frac{1}{2}\|\nabla L\|^2$ 
versus optimality gap $L - L^*$. The PL inequality 
(Eq.~\ref{eq:pl}) holds when points lie above the line with slope $\mu$. 
In the shaded \emph{adaptation regime} (near the optimum, where 
$L - L^* < 0.14$) for optimum $L^\star$, the relationship is approximately 
linear with $\mu \approx 0.03$, confirming that gradient descent makes 
consistent progress. Outside this regime, the PL bound is violated as 
trajectories traverse saddle regions. 
(b) Lipschitz continuity: Lindbladian distance 
$\|f_\xi - f_{\xi'}\|$ versus task distance 
$\|\xi - \xi'\|$, where $\xi = (\Gamma_{\mathrm{deph}}, 
\Gamma_{\mathrm{relax}})$ are the dissipation rates defining each task. 
The linear bound $\|L_\xi - L_{\xi'}\| \leq C_L 
\|\xi - \xi'\|$ holds with $C_L \approx 2.8$, confirming 
Lemma~\ref{lem:2}. 
(c) Control separation:} Distance between task-optimal controls 
$\|\theta^*_\xi - \theta^*_{\xi'}\|$ versus task parameter distance 
$\|\xi - \xi'\|$. The linear relationship ($R^2 = 0.98$) 
demonstrates consistency with Lemma~\ref{lem:3} (different tasks, different optimal controls).
} \label{fig:2}
\end{figure*}

For classical control, $f_\xi$ is an ordinary differential equation; for 
open quantum systems, $x = \rho$ is the density matrix and $f_\xi$ 
corresponds to the Lindblad superoperator 
$\mathcal{L}_\xi$~\citep{Manzano2020} governing dissipative evolution, 
with $u(t; \theta)$ specifying laser or microwave pulse waveforms.

\emph{Loss function.} For a fixed task $\xi$, we measure performance via 
a final-time loss
\begin{equation}
    L_\xi(\theta) = \ell(x_T(\theta, \xi), x^*_T),
    \label{eq:loss}
\end{equation}
where $x_T$ is the final state after evolution over a total duration $T$ and 
$\ell$ measures deviation from target $x^*_T$. In quantum control, we 
use the infidelity $\ell = 1 - \mathcal{F}$, where $\mathcal{F}$ is the 
state fidelity between states $\rho_1$ and $\rho_2$ (corresponding to the ideal final state and the measured final state from the gate operation) such that $\mathcal{F}(\rho_1, \rho_2) = \left(\mathrm{Tr}\sqrt{\sqrt{\rho_1}\rho_2\sqrt{\rho_1}}\right)^2$.  

\subsection{Adaptive strategies}
\label{ssec:3.2}

\emph{Meta-learned adaptation.} Meta-learning~\citep{finn2017maml} instead 
learns an initialization $\theta_0$ optimized for fast adaptation: rather 
than minimizing average loss directly, it minimizes post-adaptation loss. 
Given task $\xi$, the adapted policy $\pi_{\theta_K}$ is obtained via $K$ 
gradient steps according to 
\begin{equation}
    \theta_K(\xi) = \theta_0 - \eta \sum_{k=1}^{K} 
    \nabla_\theta L_\xi(\theta_{k-1}).
    \label{eq:adaptation}
\end{equation}

\emph{Baseline choice.} We use the meta-initialization $\theta_0$ as 
our baseline for computing the adaptation gap, measuring the benefit of 
task-specific adaptation beyond the learned initialization. We also 
compare against a fixed-average baseline and GRAPE~\citep{khaneja2005optimal} 
in Section~\ref{sec:5} and Appendix~A.5 to isolate the contributions of meta-learning and to benchmark against standard quantum 
optimal control.

\subsection{The Adaptation Gap}
\label{ssec:3.3}

We quantify the expected benefit of adaptation as
\begin{equation}
    G_K = \mathbb{E}_{\xi \sim \mathcal{P}}\left[L_\xi(\theta_0) - 
    L_\xi(\theta_K(\xi))\right],
    \label{eq:gap}
\end{equation}
where $\theta_0$ is the meta-initialization and $\theta_K(\xi)$ is the 
adapted policy after $K$ gradient steps on task $\xi$ drawn from distribution $\mathcal{P}$ (e.g., where $\mathcal{P}$ could correspond to the distribution of noise measurements measured over a batch of qubits). A positive $G_K$ 
indicates that task-specific adaptation provides measurable improvement 
over deploying the initialization directly. This metric answers whether 
$K$ gradient steps justify their computational overhead.

\section{Theoretical Framework}
\label{sec:4}

We develop a framework characterizing when adaptation outperforms robust, non-adaptive optimization, combining rigorous results from quantum control theory (Lemmas~\ref{lem:1}--\ref{lem:3}) with scaling predictions (Figure~\ref{fig:2}). 

\subsection{Assumptions}
\label{sec:assumptions}
We require structural conditions on the system, task distribution, and operating regime.
\begin{assumption}[System Structure]
\label{ass:1}
The system satisfies:
\begin{enumerate}[label=(\alph*)]
    \item Controllability: $\mathrm{Lie}\{H_0, H_1, \ldots, H_m\} = \mathfrak{su}(d)$, 
    where $H_0$ is the static Hamiltonian and $\{H_1, \ldots, H_m\}$ are 
    control Hamiltonians.
    \item Bounded controls: $|u_k(t)| \leq U_{\max}$ for all $k$ and $t$, 
    reflecting hardware amplitude limits, where $U_{\max}$ is the maximum control amplitude (e.g., laser power).  
\end{enumerate}
\end{assumption}

These conditions are standard in quantum control ~\citep{dalessandro2007introduction}. Controllability ensures all target states are reachable and bounded controls reflect hardware limitations.

\begin{assumption}[Task Structure]
\label{ass:2}
Tasks $\xi = (\Gamma_{\mathrm{deph}}, \Gamma_{\mathrm{relax}}) \sim \mathcal{P}$ 
have bounded dissipation rates: 
$0 < \Gamma_{\min} \leq \Gamma_j(\xi) \leq \Gamma_{\max}$ for all noise 
channels $j$. More generally, $\xi$
 could parameterize coupling strengths or other hardware variations; we focus on dissipation rates as these dominate calibration drift in superconducting qubits.     
\end{assumption}

This ensures the meta-learning problem is well-posed; dissipation rates are neither zero nor 
unbounded.   

\begin{assumption}[Operating Regime]
\label{ass:3}
The loss landscape $\nabla_\theta L_\xi(\theta)$ satisfies the Polyak-\L{}ojasiewicz (PL) condition locally near task-specific optima: for each task $\xi$, there exists 
$\mu(\xi) > 0$ such that
\begin{equation}
\frac{1}{2}\|\nabla_\theta L_\xi(\theta)\|^2 \geq 
\mu(\xi)\bigl(L_\xi(\theta) - L^*_\xi\bigr)
\label{eq:pl}
\end{equation}
in a neighborhood of the optimum $\theta^*_\xi$.
\end{assumption}

The constant $\mu(\xi)$ characterizes landscape sharpness for each task. For closed quantum systems, controllability ensures trap-free landscapes~\citep{russell2017control}. The trap-free structure does not imply PL but suggests favorable geometry, and we verify PL empirically: aggregating across the task distribution, satisfaction is essentially universal at moderate thresholds for ID and remains high under strong OOD (10$\times$); even when the strict threshold is violated, the exponential fit on $G_K$ retains $R^2 > 0.95$ (Figure~\ref{fig:pl_breadth}, Appendix~A.7). The qualitative scaling law form thus persists outside the strict PL basin and degrades gracefully; this is consistent with the KL relaxation discussed below.   

 The PL condition is empirically verified for the loss functions considered here, but it is worth noting that it places a strong constraint on loss function behavior. For the strongest form of the main result (an exponentially decaying gap bound), the PL condition is required. However, weaker but still useful results can be obtained for loss surfaces that admit the more general KL condition \citep{klCondition}, of which PL is a special case. In particular in the case of a KL loss function, one obtains a polynomial decaying gap of order $\mathcal{O}(k^{-1/(2\alpha -1)})$ instead of an exponentially decaying gap. The polynomial rate of convergence is controlled by the exponent $\alpha$ in the KL condition, which in turn expresses how slowly the gradient magnitude grows near the optimum. This fallback criterion provides theoretical guarantees even in the case where the strict PL condition is violated in favor of the weaker KL condition. The proof of the main results in the appendix are derived under this more general condition, with the PL case recovered as an outcome.

Figure~\ref{fig:2}(a) confirms that optimization trajectories in our 
open-system setting satisfy Eq.~\eqref{eq:pl} with $\mu \approx 0.03$ 
in the adaptation regime (shaded linear region near the optimum where gradient descent makes consistent progress).

\subsection{Optimization Landscape}
\label{sec:landscape}

Three results characterize the optimization geometry.

\begin{lemma}[Trap-Free Landscape]
\label{lem:1}
Under Assumption~\ref{ass:1}, for closed quantum systems the loss (fidelity) landscape contains no suboptimal local minima (maxima).
\end{lemma}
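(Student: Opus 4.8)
The plan is to exploit the two-layer structure of the closed-system landscape: the control-to-propagator map $\theta \mapsto U_T(\theta)$ composed with a \emph{kinematic} fidelity functional that depends on $\theta$ only through the final unitary $U_T$. First I would specialize Eqs.~\eqref{eq:dynamics}--\eqref{eq:loss} to the unitary case, $\dot{U}_t = -i\,(H_0 + \sum_k u_k(t;\theta) H_k) U_t$ with $U_0 = I$, so that the gate (or state) fidelity is $\mathcal{F}(\theta) = \widetilde{\mathcal{F}}(U_T(\theta))$ with, e.g., $\widetilde{\mathcal{F}}(U) = |\mathrm{Tr}(W^\dagger U)|^2/d^2$ for target gate $W$ (the state-fidelity case is analogous and simpler). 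Differentiating, $\nabla_\theta \mathcal{F} = (D_\theta U_T)^{\!*}\, \nabla_U \widetilde{\mathcal{F}}$, so every critical point of $\mathcal{F}$ either has $\nabla_U \widetilde{\mathcal{F}}(U_T)$ lying in the cokernel of $D_\theta U_T$, or is a critical point of the kinematic landscape itself.

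Second, I would use Assumption~\ref{ass:1}(a) to eliminate the first case. The Lie-algebra rank condition $\mathrm{Lie}\{H_0,\dots,H_m\} = \mathfrak{su}(d)$ implies, via geometric control theory (Chow--Rashevskii / Jurdjevic--Sussmann), that for a sufficiently expressive control parameterization and large enough horizon $T$ the end-point map $\theta \mapsto U_T(\theta)$ is a submersion onto $\mathrm{SU}(d)$, i.e.\ $D_\theta U_T$ is surjective onto $T_{U_T}\mathrm{SU}(d)$ at regular $\theta$. Then $(D_\theta U_T)^{\!*}$ is injective, so $\nabla_\theta \mathcal{F}(\theta) = 0$ forces $\nabla_U \widetilde{\mathcal{F}}(U_T(\theta)) = 0$: interior critical points of $\mathcal{F}$ sit exactly over critical points of $\widetilde{\mathcal{F}}$, and a negative-curvature direction for $\widetilde{\mathcal{F}}$ at $U_T$ pulls back to one for $\mathcal{F}$ at $\theta$.

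Third, I would classify the critical points of $\widetilde{\mathcal{F}}$ on the compact group $\mathrm{SU}(d)$. Writing $V = W^\dagger U$, one checks $\nabla \widetilde{\mathcal{F}} = 0$ iff $V$ is Hermitian up to a global phase, so the critical submanifolds are indexed by the multiplicities of the eigenvalues $\pm 1$ of the phase-rotated $V$; the global maximum is $V = e^{i\phi} I$, and at every other critical submanifold the Hessian of $\widetilde{\mathcal{F}}$ has a strictly negative eigenvalue along a direction mixing a $+1$ eigenvector with a $-1$ eigenvector — this is the trap-free Hessian computation underlying \citep{russell2017control}. Combining with the surjectivity of step two, every non-global interior critical point of $\mathcal{F}$ is a saddle, so the only local maxima of the fidelity (equivalently, local minima of $L_\xi = 1 - \mathcal{F}$) are global, proving the claim.

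The main obstacle is step two: upgrading the algebraic controllability hypothesis to genuine surjectivity of the \emph{finite-dimensional} end-point Jacobian, and coping with the amplitude cap of Assumption~\ref{ass:1}(b). With bounded controls the reachable set is still all of $\mathrm{SU}(d)$ for sufficiently long $T$, but $\theta \mapsto U_T$ may fail to be a submersion on a measure-zero set of singular controls and on the constraint boundary $|u_k| = U_{\max}$; I would either restrict the statement to the interior and to regular controls (as in \citep{russell2017control}), or adopt a piecewise-constant ansatz with enough free segments so the end-point map is generically a submersion. Step three is standard but must be carried out treating $\widetilde{\mathcal{F}}$ as a Morse--Bott function on its critical submanifolds rather than at isolated critical points.
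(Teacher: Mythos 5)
Your proposal is correct and follows essentially the same route as the paper's proof, which is a sketch deferring to Russell--Rabitz--Wu: local surjectivity of the end-point map under the Lie-algebra rank condition, the kinematic classification of critical points of $\widetilde{\mathcal{F}}$ on $\mathrm{SU}(d)$ as global optima or saddles, and genericity/parametric-transversality arguments to handle singular controls and the amplitude bound. You simply fill in more of the details (the pullback of negative-curvature directions, the Morse--Bott structure) that the paper leaves to the citation, and you flag the same caveats (bounded controls, singular controls) that the paper addresses via the genericity results of that reference.
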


\begin{proof}
See Appendix~C.1 for a proof. 
\end{proof}

This  result follows from the controllability rank condition: local surjectivity of the endpoint map prevents suboptimal traps. See \citep{russell2017control} and Appendix~C.1.

\begin{lemma}[Lipschitz Task Dependence]
\label{lem:2}
Under Assumptions~\ref{ass:1} and~\ref{ass:2}, the  quantum dynamics operators $f_\xi$ satisfy
\begin{equation}
\|f_\xi - f_{\xi'}\|  \leq C_L \|\xi - \xi'\|
\label{eq:lipschitz}
\end{equation}
where $C_L$ depends on task-specific  bounds. 
\end{lemma}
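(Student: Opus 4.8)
The plan is to exploit the fact that the Lindblad generator is an \emph{affine} function of the dissipation rates, so that the difference $f_\xi - f_{\xi'}$ collapses to a finite sum of fixed dissipator superoperators weighted by the rate differences. Concretely, I would write the task-dependent generator as
\[
\mathcal{L}_\xi[\rho] = -i[H(u),\rho] + \sum_{j=1}^m \Gamma_j(\xi)\,\mathcal{D}_j[\rho],
\qquad
\mathcal{D}_j[\rho] = L_j\rho L_j^\dagger - \tfrac12\{L_j^\dagger L_j,\rho\},
\]
where the control Hamiltonian $H(u)=H_0+\sum_k u_k H_k$ and the jump operators $L_j$ (e.g.\ $\sigma_-$ for relaxation, $\sigma_z$ for dephasing) are fixed and task-independent under Assumption~\ref{ass:2}. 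Subtracting, the Hamiltonian part cancels and
\[
f_\xi - f_{\xi'} = \mathcal{L}_\xi - \mathcal{L}_{\xi'} = \sum_{j=1}^m \bigl(\Gamma_j(\xi)-\Gamma_j(\xi')\bigr)\,\mathcal{D}_j .
\]

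Next I would bound each dissipator uniformly. Placing the superoperator norm induced by the Hilbert--Schmidt norm on $\mathbb{C}^{d\times d}$, the triangle inequality gives $\|\mathcal{D}_j\| \leq \|L_j\|_{\mathrm{op}}^2 + \|L_j^\dagger L_j\|_{\mathrm{op}} \leq 2\|L_j\|_{\mathrm{op}}^2 =: c_j$, which is finite because the Hilbert space is finite-dimensional. Then, applying the triangle inequality over channels and converting between $\ell^1$ and $\ell^2$ norms on $\mathbb{R}^m$,
\[
\|f_\xi - f_{\xi'}\| \leq \sum_{j=1}^m |\Gamma_j(\xi)-\Gamma_j(\xi')|\,c_j \leq \Bigl(\max_j c_j\Bigr)\|\xi-\xi'\|_1 \leq \Bigl(\sqrt{m}\,\max_j c_j\Bigr)\|\xi-\xi'\|_2 ,
\]
so the claim holds with $C_L = \sqrt{m}\,\max_j 2\|L_j\|_{\mathrm{op}}^2$. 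Assumption~\ref{ass:2} (rates confined to $[\Gamma_{\min},\Gamma_{\max}]$) is not strictly needed for the inequality itself but keeps $C_L$ meaningful by guaranteeing the generator stays in the well-posed regime; the bounded-control Assumption~\ref{ass:1}(b) is used only to note that the cancelled Hamiltonian term is itself bounded, which matters if one later propagates the bound through the flow to the loss $L_\xi$ via a Gr\"onwall estimate.

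For the more general parameterization flagged after Assumption~\ref{ass:2} — where $\xi$ may also enter Hamiltonian coefficients or coupling strengths — the structure is unchanged: the generator is still affine (or, under smooth dependence, $C^1$) in $\xi$, the extra commutator superoperators $\rho\mapsto[H_k,\rho]$ obey $\|\,\cdot\,\|\leq 2\|H_k\|_{\mathrm{op}}$ and contribute further bounded terms to $C_L$, and for non-affine dependence one replaces the exact expansion by the mean-value bound $\|f_\xi-f_{\xi'}\| \leq \sup_{\zeta}\|\partial_\zeta f_\zeta\|\,\|\xi-\xi'\|$ with the supremum finite over the compact parameter box of Assumption~\ref{ass:2}. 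The one point I would be careful about — essentially the only place a choice enters — is norm bookkeeping: fixing once which norm is placed on density operators (and hence on superoperators), since the numerical value of $C_L$, including the empirical $C_L\approx 2.8$ reported in Figure~\ref{fig:2}(b), depends on that convention; with the Hilbert--Schmidt choice everything above is immediate and no genuine analytic obstacle remains.
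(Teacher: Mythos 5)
Your proof is correct and follows essentially the same route as the paper's: exploit the affine dependence of the Lindblad generator on the rates, so the difference reduces to rate differences weighted by fixed, bounded dissipator (and, for Hamiltonian-parameter variation, commutator) superoperators, then apply the triangle inequality. Your version merely adds explicit constants ($\|\mathcal{D}_j\| \leq 2\|L_j\|_{\mathrm{op}}^2$, the $\ell^1$--$\ell^2$ conversion) and the mean-value bound for non-affine dependence, which the paper leaves implicit.
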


\begin{proof}
See Appendix~C.1 for the proof. 
\end{proof}

This ensures the meta-learning problem is well-posed: nearby tasks induce nearby dynamics. Figure~\ref{fig:2}(b) confirms this bound, showing dynamics operator (Lindbladian) distance is bounded by $C_L \approx 2.8$ times the task distance.

\subsection{Task Variance and Control Separation}
\label{ssec:4.3}

A central question is whether different tasks require meaningfully  different optimal controls. We quantify this here:

\begin{definition}[Task Variance]
\label{def:1}
For a task distribution $\mathcal{P}$ over task parameters $\xi$, we define the task variance as
\begin{equation}
\sigma^2_\tau = \sum_{i} \mathrm{Var}_{\xi \sim \mathcal{P}}[\xi_i].
\label{eq:variance}
\end{equation}
In our quantum control experiments, $\xi = (\Gamma_{\mathrm{deph}}, 
\Gamma_{\mathrm{relax}})$, where $\Gamma_{\mathrm{deph}}= \frac{1}{T_2}$ (decoherence) and $\Gamma_{\mathrm{relax}}= \frac{1}{T_1}$ (relaxation).  So $\sigma^2_\tau = \mathrm{Var}(\Gamma_{\mathrm{deph}}) 
+ \mathrm{Var}(\Gamma_{\mathrm{relax}})$.
\end{definition}
The following key theoretical result connects task variance to control separation:
\begin{lemma}[Control Separation]
\label{lem:3}
Let $\theta^*(\xi)$ denote the optimal control for task $\xi \in \Xi$. We will fix a reference task $\xi^* \in \Xi$ that has an optimum $\theta^\star = \theta^\star  (\xi^\star)$ and assume the following conditions: 
\begin{enumerate} 
    \item \textbf{Smoothness:} $L_\xi(\theta)$ is twice continuously differentiable in $(\theta, \xi)$.  
    \item  \textbf{Strong convexity at the optimum:} The Hessian $H = \nabla^2_{\theta \theta} L_{\xi^\star}(\theta^\star)$ satisfies $H\succeq \mu I$ for constant $\mu >0$. 
    \item  \textbf{Task sensitivity:} The mixed partial derivative $M= \nabla^2_{\xi\theta} L_{\xi^\star} (\theta^\star)$ satisfies that $\sigma_{\min} (M) > 0$  (with $\sigma_{\min} (M)$ as the smallest singular value of $M$). 
    \item \textbf{Sufficient control dimension:} $\dim(\theta) \geq \dim(\xi)$.
\end{enumerate}

Then there exists a neighborhood $\mathcal{B}$ of $\xi^\star$  and constant 
$C_{\mathrm{sep}} > 0$ such that for all $\xi \in \mathcal{B}$:   
\begin{equation}
\|\theta^*(\xi) - \theta^*(\xi^*)\| \geq C_{\mathrm{sep}}\|\xi - \xi^*\|
\label{eq:separation}
\end{equation}
such that $C_{\mathrm{sep}} = \sigma_{\min}(H^{-1}M) \geq \sigma_{\min}(M)/\|H\| > 0$. 
\end{lemma}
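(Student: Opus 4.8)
The plan is to treat $\theta^*(\cdot)$ as an implicitly defined function of the task parameter and linearize it about $\xi^\star$. By \textbf{Smoothness}, the optimal control solves the first-order condition $g(\theta,\xi) := \nabla_\theta L_\xi(\theta) = 0$; at $(\theta^\star,\xi^\star)$ the Jacobian $D_\theta g = \nabla^2_{\theta\theta}L_{\xi^\star}(\theta^\star) = H$ is invertible because $H\succeq\mu I$ with $\mu>0$ (\textbf{Strong convexity at the optimum}). The Implicit Function Theorem then yields a $C^1$ branch $\xi\mapsto\theta^*(\xi)$ on a neighborhood $\mathcal{B}_0$ of $\xi^\star$, passing through the reference optimum, with Jacobian
\[
J := \left.\frac{d\theta^*}{d\xi}\right|_{\xi^\star} = -H^{-1}M, \qquad M := D_\xi g\big|_{(\theta^\star,\xi^\star)} = \nabla_\xi\nabla_\theta L_{\xi^\star}(\theta^\star),
\]
the $\dim(\theta)\times\dim(\xi)$ mixed-Hessian block. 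Local minimality of this branch (as opposed to mere criticality) follows from continuity of the Hessian together with $H\succeq\mu I$, so I will read $\theta^*(\xi)$ as the continuation of the reference optimum, which sidesteps any global-uniqueness question for $L_\xi$.

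Next I would record the two elementary linear-algebra facts that pin down the constant. First, \textbf{Task sensitivity} ($\sigma_{\min}(M)>0$) together with \textbf{Sufficient control dimension} ($\dim(\theta)\ge\dim(\xi)$) makes $M$ a tall matrix of full column rank, so $H^{-1}M$ is injective and $\sigma_{\min}(J) = \min_{\|v\|=1}\|H^{-1}Mv\| > 0$. Second, $Mv = H(H^{-1}Mv)$ gives $\|Mv\|\le\|H\|\,\|H^{-1}Mv\|$, hence $\sigma_{\min}(J)\ge\sigma_{\min}(M)/\|H\|>0$ --- which is exactly the chain $C_{\mathrm{sep}} = \sigma_{\min}(H^{-1}M)\ge\sigma_{\min}(M)/\|H\|$ in the statement. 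Without conditions (3)--(4) the kernel of $M$ would contain task directions along which $\theta^*$ is stationary to first order, so these hypotheses are genuinely used here.

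Finally I would upgrade the infinitesimal estimate to a bound valid on a neighborhood. Writing $\theta^*(\xi)-\theta^*(\xi^\star) = J(\xi-\xi^\star) + r(\xi)$ with $\|r(\xi)\| = o(\|\xi-\xi^\star\|)$ by differentiability of $\theta^*(\cdot)$, the reverse triangle inequality gives $\|\theta^*(\xi)-\theta^*(\xi^\star)\| \ge \sigma_{\min}(J)\,\|\xi-\xi^\star\| - \|r(\xi)\|$; shrinking to $\mathcal{B}\subseteq\mathcal{B}_0$ on which $\|r(\xi)\|\le\tfrac12\sigma_{\min}(J)\|\xi-\xi^\star\|$ yields \eqref{eq:separation} with constant $\tfrac12\sigma_{\min}(J)$, and more generally the bound holds with any $C<\sigma_{\min}(J)=C_{\mathrm{sep}}$ on a correspondingly small neighborhood, $C_{\mathrm{sep}}$ being the sharp linearization slope (achieved asymptotically along the minimal singular direction of $J$). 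The main obstacle is precisely this last step: the clean constant $C_{\mathrm{sep}}$ is the derivative of $\xi\mapsto\theta^*(\xi)$ at $\xi^\star$, so a literal reading of \eqref{eq:separation} on a fixed neighborhood costs an arbitrarily small slack unless one invokes a quantitative (Lipschitz-Jacobian) form of the IFT --- which is where the full $C^2$ assumption does its work; by contrast, invertibility of $H$, the IFT itself, and the two singular-value inequalities are all routine.
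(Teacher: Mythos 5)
Your proposal follows essentially the same route as the paper's own proof: apply the implicit function theorem to the first-order condition $\nabla_\theta L_\xi(\theta^*(\xi))=0$ (invertibility of $H$ from strong convexity), read off the Jacobian $-H^{-1}M$, bound $\sigma_{\min}(H^{-1}M)\geq\sigma_{\min}(M)/\|H\|$ using full column rank from conditions (3)--(4), and conclude via Taylor expansion for $\xi$ near $\xi^\star$. If anything, you are more careful than the paper at the final step (the paper simply asserts ``the result follows for $\|\xi-\xi^\star\|$ sufficiently small that the linear term dominates,'' whereas you correctly note the stated constant $C_{\mathrm{sep}}$ is the sharp linearization slope and a fixed neighborhood strictly requires a small slack), so the argument is correct and matches the paper's approach.
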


\begin{proof}
See Appendix~C.1 for the proof. 
\end{proof}

This lemma ensures that task variance translates to meaningful differences in optimal controls. We validate this empirically in Figure~\ref{fig:2}(c) ($R^2 = 0.98$).

\begin{figure*}[ht!]
    \centering
    \includegraphics[width=0.8\linewidth]{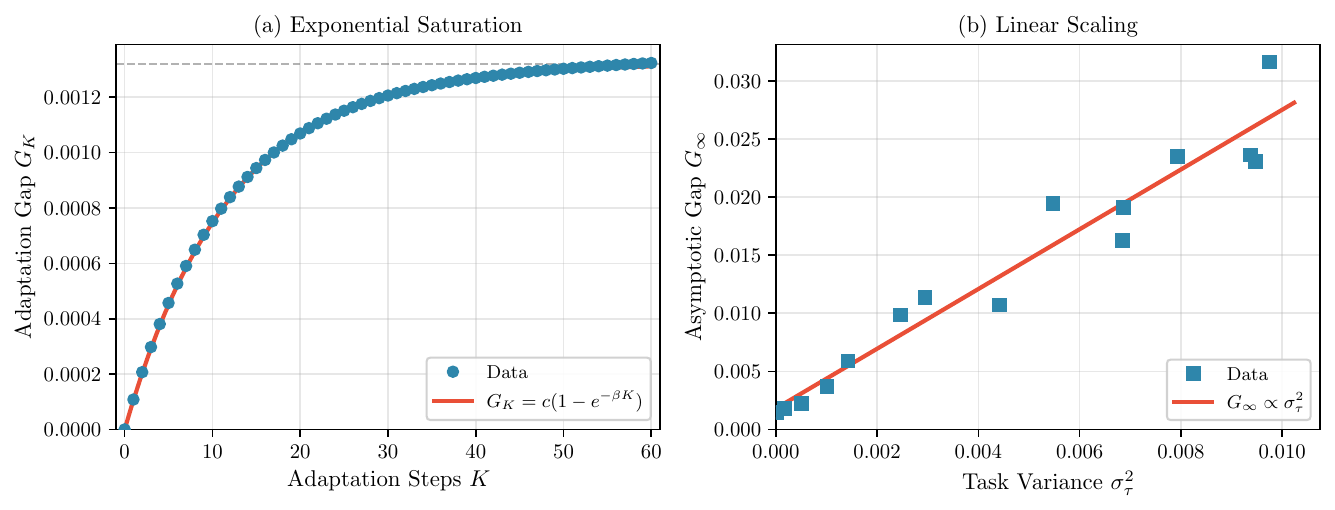}
\caption{\textbf{Scaling law validation.} (a) Adaptation gap $G_K$ versus inner-loop steps $K$ exhibits exponential saturation ($R^2 > 0.999$) for small task variance (in-distribution), consistent with Theorem~\ref{thm:1}. The fitted curve $G_K = c(1 - e^{-\beta K})$ captures the transition from rapid gains to diminishing returns. (b) Asymptotic gap $G_\infty$ scales linearly with actual task variance $\sigma^2_\tau = \mathrm{Var}(\Gamma_{\mathrm{deph}}) + \mathrm{Var}(\Gamma_{\mathrm{relax}})$ ($R^2 = 0.94$), evaluated across ${\sim}0.01$--$5\times$ the nominal training variance. This is consistent with the predicted proportionality $A_\infty \propto \sigma^2_\tau$.}
    \label{fig:3}
\end{figure*}

\subsection{Main Result}

We now state our main theorem, which characterizes how the adaptation gap scales with system properties.

\begin{theorem}[Adaptation Gap Scaling Law]
\label{thm:1}
Consider a system with task variance $\sigma^2_\tau$ (in the quantum case,  Definition~\ref{def:1}) satisfying Assumptions~\ref{ass:1}--\ref{ass:3} and worst-case PL constant $\mu_{\min}$. Suppose the meta-initialization $\theta_0$ lies within the basin of attraction where the PL condition (Assumption~\ref{ass:3}) holds. Then the expected adaptation gap after $K$ gradient steps with learning rate $\eta$ satisfies
\begin{equation}
G_K \geq A_\infty\left(1 - e^{-\beta K}\right)
\end{equation}
where $A_\infty = c\,\sigma^2_\tau$ is the asymptotic gap, $\beta = \eta\mu_{\min}$ is the adaptation rate, and $c > 0$ aggregates system constants. Here $\mu_{\min} = \inf_{\xi} \mu(\xi)$ is the worst-case PL constant. 
\end{theorem}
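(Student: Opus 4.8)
The plan is to decouple the bound into an inner-loop convergence factor governed by the PL constant and a lower bound on the average initial optimality gap $\mathbb{E}_\xi[L_\xi(\theta_0)-L^*_\xi]$ governed by task variance through Lemma~\ref{lem:3}. I would first establish per-task linear convergence: fixing $\xi$ and working in the neighborhood of $\theta^*_\xi$ where Assumption~\ref{ass:3} holds, the $C^2$ hypothesis of Lemma~\ref{lem:3} together with the boundedness of the task set (Assumption~\ref{ass:2}) supplies a uniform local gradient-Lipschitz constant $\ell$ on the relevant compact sublevel set. The standard descent lemma then gives $L_\xi(\theta_k)-L_\xi(\theta_{k-1})\le-\tfrac{\eta}{2}\|\nabla L_\xi(\theta_{k-1})\|^2$ for $\eta\le 1/\ell$, and combining with \eqref{eq:pl} yields the contraction
\[
L_\xi(\theta_k) - L^*_\xi \le \bigl(1-\eta\mu(\xi)\bigr)\bigl(L_\xi(\theta_{k-1}) - L^*_\xi\bigr);
\]
since $L_\xi$ decreases monotonically, the iterates never leave the neighborhood. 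Iterating $K$ times, using $\mu(\xi)\ge\mu_{\min}$ and $1-x\le e^{-x}$, I obtain $L_\xi(\theta_0)-L_\xi(\theta_K)\ge(1-e^{-\beta K})(L_\xi(\theta_0)-L^*_\xi)$ with $\beta=\eta\mu_{\min}$, and taking $\mathbb{E}_\xi$ gives $G_K\ge(1-e^{-\beta K})\,\mathbb{E}_\xi[L_\xi(\theta_0)-L^*_\xi]$.

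Next I would lower-bound $\mathbb{E}_\xi[L_\xi(\theta_0)-L^*_\xi]$ by $c\,\sigma^2_\tau$. The PL inequality implies quadratic growth — integrating the gradient flow shows $\mathrm{dist}(\theta,\mathcal{X}^*_\xi)^2\le\tfrac{2}{\mu(\xi)}(L_\xi(\theta)-L^*_\xi)$ — and local strong convexity at the optimum (Lemma~\ref{lem:3}) makes the minimizer locally unique, so $L_\xi(\theta_0)-L^*_\xi\ge\tfrac{\mu_{\min}}{2}\|\theta_0-\theta^*(\xi)\|^2$. Taking the reference task of Lemma~\ref{lem:3} to be $\xi^\star=\mathbb{E}_\xi[\xi]$, with $\mathcal{P}$ supported in its neighborhood $\mathcal{B}$, and writing $\bar\theta^*=\mathbb{E}_\xi[\theta^*(\xi)]$, the bias--variance identity gives $\mathbb{E}_\xi\|\theta_0-\theta^*(\xi)\|^2\ge\mathbb{E}_\xi\|\theta^*(\xi)-\bar\theta^*\|^2=\tfrac12\mathbb{E}_{\xi,\xi'}\|\theta^*(\xi)-\theta^*(\xi')\|^2$ for i.i.d.\ copies $\xi,\xi'$. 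Lemma~\ref{lem:3} bounds the integrand below by $C_{\mathrm{sep}}^2\|\xi-\xi'\|^2$, and $\mathbb{E}_{\xi,\xi'}\|\xi-\xi'\|^2=2\sigma^2_\tau$, so $\mathbb{E}_\xi[L_\xi(\theta_0)-L^*_\xi]\ge\tfrac{\mu_{\min}C_{\mathrm{sep}}^2}{2}\sigma^2_\tau=:c\,\sigma^2_\tau$. Substituting into the bound above gives $G_K\ge A_\infty(1-e^{-\beta K})$ with $A_\infty=c\,\sigma^2_\tau$ and $\beta=\eta\mu_{\min}$, as claimed.

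The main obstacle is not any individual inequality but reconciling the three locality hypotheses so the argument closes uniformly in $\xi$: the PL/quadratic-growth neighborhood (Assumption~\ref{ass:3}), the control-separation neighborhood $\mathcal{B}$ (Lemma~\ref{lem:3}), and the basin-of-attraction hypothesis on $\theta_0$ must be chosen so that (i) for every $\xi$ in the support of $\mathcal{P}$ the entire inner-loop trajectory $\{\theta_k(\xi)\}_{k\le K}$ stays inside the region where \eqref{eq:pl} applies — which I would arrange by taking the neighborhood to be a sublevel set, invariant under gradient descent — and (ii) $\mathcal{P}$ is concentrated enough that the separation bound holds for essentially all pairs $\xi,\xi'$ drawn from it, which constrains $\sigma^2_\tau$ relative to the radius of $\mathcal{B}$. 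These are precisely the "meta-initialization near the basin" and bounded-task-spread qualifiers in the statement; once they are fixed, the remainder is bookkeeping of constants into $c$. A secondary subtlety is that Assumption~\ref{ass:3} posits only PL, not smoothness, so I would explicitly invoke the $C^2$ regularity from Lemma~\ref{lem:3} to supply the local gradient-Lipschitz constant $\ell$ needed for the descent lemma and the step-size condition $\eta\le 1/\ell$.
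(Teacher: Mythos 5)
Your Parts A--C (PL contraction, gap decomposition, expectation) mirror the paper's proof almost verbatim, with the welcome extra care of sourcing the local gradient-Lipschitz constant from the $C^2$ regularity and arguing trajectory containment via sublevel-set invariance. Where you genuinely diverge is the variance step (the paper's Part D): the paper assumes a locally quadratic loss $L_\xi(\theta)=\tfrac12(\theta-\theta^*(\xi))^\top H(\theta-\theta^*(\xi))$ with task-independent Hessian, a linear optima map $\theta^*(\xi)=A\xi+b$, and isotropic $\mathrm{Cov}(\xi)$, shows the one-step MAML solution coincides with the average-task solution $\bar\theta^*=\mathbb{E}_\xi[\theta^*(\xi)]$, and then computes $A_\infty=\tfrac12\mathrm{tr}(A^\top H A\,\mathrm{Cov}(\xi))=c\,\sigma^2_\tau$ exactly under that model. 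You instead derive a genuine lower bound without the quadratic model: PL implies quadratic growth, the bias--variance identity removes any dependence on where $\theta_0$ actually sits (you never need $\theta_0=\bar\theta^*$), and Lemma~\ref{lem:3} plus $\mathbb{E}_{\xi,\xi'}\|\xi-\xi'\|^2=2\sigma^2_\tau$ gives $\mathbb{E}_\xi[L_\xi(\theta_0)-L^*_\xi]\ge\tfrac{\mu_{\min}C_{\mathrm{sep}}^2}{2}\sigma^2_\tau$. This buys more generality and matches the theorem's inequality form more honestly (the paper's own remark concedes its Part D needs the stronger local-quadratic/isotropy assumptions), at the cost of two things: your constant $c$ is a worst-case bound rather than the paper's geometric characterization $\mathrm{tr}(A^\top H A)/(2\dim\xi)$, which also explains why the bound is nearly tight for MAML-trained $\theta_0$; and you apply Lemma~\ref{lem:3} to arbitrary pairs $(\xi,\xi')$, whereas it is stated only against a fixed reference task $\xi^*$ --- you flag this, and it is repairable by continuity of the Jacobian $-H^{-1}M$ on a small $\mathcal{B}$ containing the support of $\mathcal{P}$ (with a slightly degraded separation constant), but it should be stated as an explicit extension rather than a citation of the lemma as written.
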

 \emph{Remark (proof roadmap): } The scaling law decomposes into three components (full proof in \hyperref[app:proof]{Appendix C.2}):
\begin{enumerate}[label=(\alph*),leftmargin=*]
    \item \textbf{Functional form} $G_K \propto (1 - e^{-\beta K})$:
    rigorously proven under Assumption~\ref{ass:3} (local PL); see Parts~A--C. The proof of the decay rate of the gap follows the same form as standard arguments for the convergence of first-order gradient descent techniques.
    \item \textbf{Rate} $\beta = \eta\mu_{\min}$:
    rigorous given PL plus a standard L-smoothness regularity condition and a bounded learning rate $\eta \leq 1/L$; also Parts~A--C.
    \item \textbf{Ceiling} $A_\infty \propto \sigma_\tau^2$: relies on four additional local assumptions: (i) quadratic loss approximation near optima, (ii) approximately task-independent curvature $H$, (iii) linear optima map $\theta^*(\xi) = A\xi + b$, and (iv) isotropic task variance. Under these assumptions the MAML initialization coincides with the average-task solution, and the expected initial suboptimality reduces to a trace-covariance expression proportional to $\sigma^2_\tau$.  
\end{enumerate}
Components (a) and (b) hold robustly; (c) is a local approximation that may pick up higher-order corrections under large distribution shift.

\begin{proof}
See Appendix~C.2 for the proof. 
\end{proof}

\subsection{Practical Implications}
The corollaries below depend only on components~(a) and~(b) of the remark above; they remain actionable even when the ceiling decomposition~(c) is approximate.  
\begin{corollary}[Diminishing Returns Threshold]
\label{cor:1}
To capture fraction $\alpha$ of the maximum adaptation gain $A_\infty$, the required gradient budget is
\begin{equation}
K_\alpha = \frac{1}{\beta}\log\frac{1}{1-\alpha}.
\end{equation}
(e.g., $K_{0.95} = 3/\beta$ captures 95\% of the asymptotic gain).
\end{corollary}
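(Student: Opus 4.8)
The plan is to invert the scaling-law bound of Theorem~\ref{thm:1} by elementary algebra. Since Theorem~\ref{thm:1} gives $G_K \geq A_\infty(1 - e^{-\beta K})$ with $A_\infty = c\,\sigma^2_\tau > 0$ and $\beta = \eta\mu_{\min} > 0$, it suffices to identify the smallest $K$ for which the right-hand side alone certifies $A_\infty(1 - e^{-\beta K}) \geq \alpha A_\infty$; for any such $K$, transitivity then yields $G_K \geq \alpha A_\infty$, independent of how loose the bound in Theorem~\ref{thm:1} may be. So the whole corollary reduces to a one-line solve for $K$.

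First I would divide the target inequality $A_\infty(1 - e^{-\beta K}) \geq \alpha A_\infty$ through by $A_\infty > 0$ to obtain $1 - e^{-\beta K} \geq \alpha$, equivalently $e^{-\beta K} \leq 1 - \alpha$. Because $\alpha \in [0,1)$ we have $1 - \alpha > 0$, so taking natural logarithms is valid, and (using $\beta > 0$) rearranging gives $\beta K \geq \log\frac{1}{1-\alpha}$, hence $K \geq \frac{1}{\beta}\log\frac{1}{1-\alpha} =: K_\alpha$. Monotonicity of $K \mapsto 1 - e^{-\beta K}$ shows the certified gain is at least $\alpha A_\infty$ for every $K \geq K_\alpha$, which is exactly the claimed threshold. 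Substituting $\alpha = 0.95$ recovers the stated example, $K_{0.95} = \frac{1}{\beta}\log 20 \approx 3/\beta$ since $\log 20 \approx 2.996$.

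There is no genuine obstacle here; the only subtlety is interpretive rather than technical. Since Theorem~\ref{thm:1} is a lower bound, $K_\alpha$ is a \emph{sufficient} budget to guarantee fraction $\alpha$ of the asymptotic gain $A_\infty$ — it need not coincide with the exact step at which the true gap first reaches $\alpha A_\infty$. If the worst-case PL constant $\mu_{\min}$ underestimates the effective local curvature, the empirical gap saturates faster and $K_\alpha$ is conservative; this is consistent with the measured saturation rates fit in Figure~\ref{fig:3}(a), where the empirically observed $\beta$ (rather than a theoretical underestimate) governs the transition to diminishing returns.
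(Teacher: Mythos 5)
Your proposal is correct and matches the paper's (implicit) derivation: the corollary is just the algebraic inversion of the bound in Theorem~\ref{thm:1}, setting $1-e^{-\beta K}\geq \alpha$ and solving $K_\alpha = \frac{1}{\beta}\log\frac{1}{1-\alpha}$, with $\log 20 \approx 3$ giving $K_{0.95}\approx 3/\beta$. Your added remark that $K_\alpha$ is a sufficient (possibly conservative) budget, since the theorem is only a lower bound, is a correct and worthwhile clarification consistent with how the paper uses the result.
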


\begin{corollary}[When non-adaptive deployment suffices]
\label{cor:2}
Adaptation provides negligible benefit when:
\begin{enumerate}[label=(\alph*)]
    \item $\sigma^2_\tau$ is small (tasks are similar), or
    \item $\beta \ll 1/K_{\mathrm{budget}}$ (adaptation is slow).
\end{enumerate}
In these regimes, non-adaptive deployment  achieves comparable performance with lower overhead.
\end{corollary}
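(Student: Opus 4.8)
The plan is to obtain Corollary~\ref{cor:2} by pairing the lower bound of Theorem~\ref{thm:1} with a matching \emph{upper} bound on $G_K$, since a lower bound alone cannot certify that a gain is negligible. First I would record the unconditional estimate $0 \le G_K \le \mathbb{E}_\xi[L_\xi(\theta_0)-L^*_\xi] =: \bar\Delta_0$: the left inequality holds because gradient descent with $\eta \le 1/L_{\max}$ is monotone, and the right one because $L_\xi(\theta_K)\ge L^*_\xi$. Next I would upgrade this using exactly the local structure already invoked in Part~D of the proof of Theorem~\ref{thm:1} (in Appendix~C.2) — a second-order expansion of $L_\xi$ about $\theta^*_\xi$ with curvature at most $L_{\max}$, together with the implicit-function-theorem linearization $\theta^*(\xi)=\bar\theta - H^{-1}M(\xi-\bar\xi)+o(\|\xi-\bar\xi\|)$ — to get $\bar\Delta_0 \le \tfrac{L_{\max}}{2}\,\mathbb{E}_\xi\|\theta_0-\theta^*_\xi\|^2 \le \tfrac{L_{\max}}{2}\big(\|\theta_0-\bar\theta\|^2 + \|H^{-1}M\|_{\mathrm{op}}^2\,\sigma^2_\tau + o(\sigma^2_\tau)\big)$, where $\bar\theta=\mathbb{E}_\xi[\theta^*_\xi]$ and the variance bound $\|H^{-1}M\|_{\mathrm{op}}^2\,\sigma^2_\tau \ge \mathrm{Tr}\,\mathrm{Cov}_\xi(\theta^*_\xi)$ uses Definition~\ref{def:1}. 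For a meta-initialization near the task barycenter $\bar\theta$ — the natural fixed point of the outer loop in the well-conditioned regime — the bias term drops and $\bar\Delta_0 = O(\sigma^2_\tau)$.

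Case~(a) then follows immediately: as $\sigma^2_\tau\to0$ the bound forces $G_K\to0$ \emph{uniformly in $K$}, so no adaptation budget recovers more than an $O(\sigma^2_\tau)$ improvement; geometrically the task optima $\{\theta^*_\xi\}$ collapse to a point at which a centered $\theta_0$ is already near-optimal on every task, so $L_\xi(\theta_K)\approx L_\xi(\theta_0)$. For case~(b) I would specialize to the approximately-quadratic regime of Part~D (task-independent curvature $\mu$): there gradient descent gives $L_\xi(\theta_K)-L^*_\xi = (1-\eta\mu)^{2K}\big(L_\xi(\theta_0)-L^*_\xi\big)$, hence $G_K = \big(1-(1-\eta\mu)^{2K}\big)\bar\Delta_0 \le 2\eta\mu K\,\bar\Delta_0 = 2\beta K\,\bar\Delta_0$ whenever $\beta K\le 1$. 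Thus $\beta \ll 1/K_{\mathrm{budget}}$ forces $G_{K_{\mathrm{budget}}}\ll\bar\Delta_0$; equivalently, by Corollary~\ref{cor:1}, the attainable fraction $\alpha = 1-e^{-\beta K_{\mathrm{budget}}}\approx \beta K_{\mathrm{budget}}$ of the asymptote is itself tiny, so the budget is exhausted long before adaptation pays off. In both regimes $\theta_0$ matches the adapted controller while saving $K_{\mathrm{budget}}$ per-task gradient evaluations through the dynamics — ``comparable performance with lower overhead.''

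The main obstacle is that Corollary~\ref{cor:2} is \emph{not} a formal consequence of Theorem~\ref{thm:1} alone: it requires the reverse, upper-bound direction of the Part~D variance estimate, which in turn needs genuine local strong convexity and near-linear task dependence rather than the bare PL inequality of Assumption~\ref{ass:3}, plus control of the $\sigma^2_\tau$-independent bias $\|\theta_0-\bar\theta\|$ left by imperfect meta-training. To make ``negligible'' precise I would fix a tolerance $\epsilon>0$ and exhibit explicit thresholds — $\sigma^2_\tau \le 2\epsilon/(L_{\max}\|H^{-1}M\|_{\mathrm{op}}^2)$ for (a) and $K_{\mathrm{budget}} \le \epsilon/(2\beta\bar\Delta_0)$ for (b) — below which $G_K\le\epsilon$, and verify that the first-order approximation $\alpha\approx\beta K$ used in (b) is self-consistent precisely in the claimed regime $\beta K_{\mathrm{budget}}\ll1$. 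The remaining work is routine bookkeeping of constants; I would also note that in the ill-conditioned case the threshold $1/K_{\mathrm{budget}}$ must be replaced by $1/(\kappa K_{\mathrm{budget}})$ with $\kappa = L_{\max}/\mu_{\min}$, so the statement as written implicitly assumes the well-conditioned (task-independent-curvature) regime of Part~D.
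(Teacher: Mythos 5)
Your proposal is correct, and it is in fact more complete than what the paper does: the paper offers no separate proof of Corollary~\ref{cor:2}, treating it as an immediate reading of Theorem~\ref{thm:1} --- case (a) because $A_\infty = c\,\sigma^2_\tau$ makes the ceiling small, case (b) because $1-e^{-\beta K_{\mathrm{budget}}}\approx \beta K_{\mathrm{budget}}\ll 1$ makes the attainable fraction small --- with the tacit license to read the lower bound as an estimate of $G_K$ coming from the remark in Part~C of Appendix~C.2 that the bound is ``approximately tight'' for MAML-optimized $\theta_0$. You correctly identify that, taken literally, a lower bound cannot certify negligibility, and you supply the missing ingredient: the trivial upper bound $G_K \le \mathbb{E}_\xi[L_\xi(\theta_0)-L^*_\xi]$, the smoothness-plus-linearized-optima estimate showing this quantity is $O(\sigma^2_\tau)$ when $\theta_0$ sits near the task barycenter (the reverse direction of the Part~D variance calculation, which indeed needs local strong convexity, near-linear task dependence, and control of the bias $\|\theta_0-\bar\theta\|$ rather than the bare PL inequality), and the quadratic-regime progress bound $G_K \le 2\beta K\,\bar\Delta_0$ for case (b). Your caveat that the case-(b) threshold degrades to $1/(\kappa K_{\mathrm{budget}})$ with $\kappa = L_{\max}/\mu_{\min}$ under ill-conditioning is also correct and is not acknowledged in the paper, which implicitly works in the well-conditioned, task-independent-curvature regime of Part~D. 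In short: same conceptual reading of the scaling law as the paper intends, but your version makes explicit the upper-bound direction and the extra assumptions needed to make ``negligible'' a theorem rather than a heuristic.
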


\section{Quantum Control Experiments}
\label{sec:5}
 We validate Section~\ref{sec:4} results here. Full experimental details, algorithms, and expanded results appear in Appendix~A and Appendix~B. A  quantum control primer is in Appendix~D.   

\subsection{Experimental Setup}
\label{sec:setup}
To test whether the scaling law holds under realistic quantum noise, we construct a differentiable simulator capturing the essential physics of qubit calibration.\newline \newline 
\textbf{System model.} We model task variations in qubit dynamics by solving the Lindblad master equation for the density matrix (quantum state) $\rho$ (a specific case of Eq.~\eqref{eq:dynamics}), in which   
\begin{equation}
\dot{\rho}(t) = -i[H_0 + \textstyle\sum_k u_k(t)H_k, \rho] + \textstyle\sum_j \Gamma_j(\xi)\mathcal{D}[L_j]\rho,
\label{eq:lindblad}
\end{equation}
where $\Gamma_j(\xi)$ are task-dependent noise rates. We have that 
$\xi = (\Gamma_{\text{deph}}, \Gamma_{\text{relax}})$ directly, 
so the task parameter is the noise rate tuple itself, with jump 
operators $L_j$ parameterizing relaxation and dephasing, where $\mathcal{D}[L]\rho = L\rho L^\dagger - \frac{1}{2}\{L^\dagger L, \rho\}$ is the dissipator superoperator. Control pulses $u_k(t)$ weigh the control Hamiltonians $H_k$, competing against noise channels that scramble the quantum state. See Appendix~D for system details. Tasks are sampled from $\mathcal{P}(\xi)$ with variance $\sigma^2_\tau = \mathrm{Var}(\Gamma_{\mathrm{deph}}) + \mathrm{Var}(\Gamma_{\mathrm{relax}})$ (Figure~\ref{fig:1new}(a)).
\newline \newline 
\textbf{Policy and training.} The control policy $\pi_\theta$ maps task features to piecewise-constant control amplitudes via a two-layer multi-layer perceptron (MLP). Following standard meta-learning practice~\citep{finn2017maml}, the policy receives task parameters $\xi = (\Gamma_{\mathrm{deph}}, \Gamma_{\mathrm{relax}})$ as context input, analogous to task embeddings. This differs from the theoretical formulation where $u(t;\theta)$ depends only on $\theta$; in practice, the $\xi$-conditioning enables task-appropriate pulse initialization, while the adaptation gap $G_K$ still measures improvement from gradient updates to $\theta$ itself. In deployment , tasks $\xi$ such as noise rates can be estimated via standard $T_1/T_2$  characterization protocols that already form part of any calibration workflow~\citep{krantz2019quantum}.  See Appendix~A and~B for details on training.\newline \newline    
\textbf{Baselines and metrics.} We compare three strategies: (1) meta-initialization $\theta_0$ (FOMAML, before adaptation), (2) fixed-average baseline (trained on mean task $\bar{\xi}$), and (3) GRAPE optimized per-task from scratch (results in Appendix~A.5). We measure the adaptation gap $\mathbb{E}_\xi[L_\xi(\theta_0) - L_\xi(\theta_K(\xi))]$ and fit to $c(1 - e^{-\beta K})$ to extract adaptation rate $\beta$ and asymptote $A_\infty = c$.

\begin{figure*}[ht!]
    \centering
    \includegraphics[width=\linewidth]{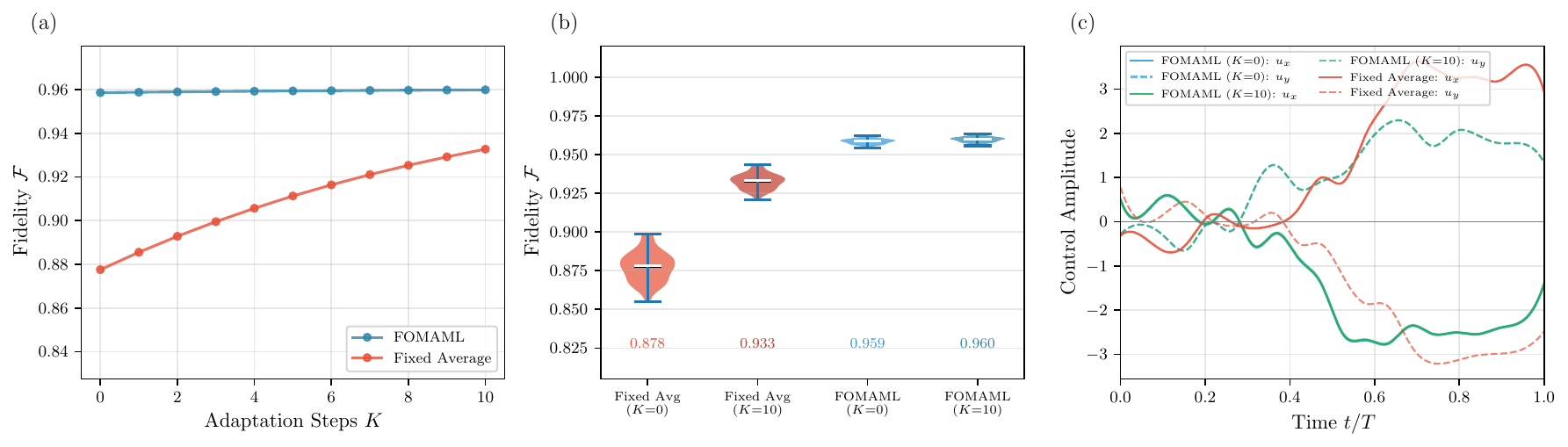}
\caption{\textbf{Adaptation dynamics (mild out-of-distribution (OOD), $\sim$1.1$\times$ training noise).} 
(a) Fidelity versus adaptation steps $K$ for two initialization strategies: 
meta-learned (FOMAML) starts higher and converges faster than the 
fixed-average baseline. 
(b) Fidelity distributions across strategies: fixed-average baseline 
achieves $\mathcal{F} = 0.881$ before adaptation and improves substantially to $\mathcal{F} = 0.935$ 
after $K = 10$ steps; meta-initialization achieves $\mathcal{F} = 0.959$ 
before adaptation, improving only marginally to $\mathcal{F} = 0.960$ after adaptation. 
(c) Learned pulse sequences: FOMAML pulses before and after adaptation are nearly indistinguishable, confirming the 
initialization already generalizes well. Fixed-average optimized pulses after $K=10$ adaptation steps (orange) 
show qualitatively different structure. }
    \label{fig:4}
\end{figure*}
\subsection{Single-Qubit Gate Calibration}
\label{sec:5.2}
The X gate  (a single-qubit bit-flip) under dephasing and relaxation captures the essential calibration challenge: adapting to device-specific decoherence that drifts over time.
 Figure~\ref{fig:3}(a) validates Theorem~\ref{thm:1}'s prediction that $G_K$ saturates exponentially. The adaptation gap fits $c(1 - e^{-\beta K})$ with $R^2 > 0.99$ and rate $\beta = 0.083 $. Each gradient step captures roughly 8\% of the remaining improvement; after $K \approx 20$ steps, diminishing returns set in. \newline \newline 
\textbf{Linear scaling with task variance.} 
To vary $\sigma^2_\tau$ in Figure~\ref{fig:3}(b), we scale the uniform distribution support by a diversity factor $d \in [0.1, 3.0]$, with $\sigma^2_\tau = \mathrm{Var}(\Gamma_{\mathrm{deph}}) + \mathrm{Var}(\Gamma_{\mathrm{relax}})$ computed empirically from sampled tasks. Figure~\ref{fig:3}(b) confirms that $G_\infty$ scales linearly with $\sigma^2_\tau$ ($R^2 = 0.94$), evaluated across ${\sim}0.01$--$5\times$ the nominal training variance. Crucially, when task variance is small ($\sigma^2_\tau < 0.002$), adaptation provides negligible measurable benefit ($G_\infty \approx 0$). This is precisely the regime where a fixed controller suffices.\newline \newline 
\textbf{Adaptation dynamics.}  Figure~\ref{fig:4} compares initialization strategies on challenging tasks. Meta-initialization (FOMAML) starts at high fidelity ($\mathcal{F} = 0.959$) and improves only marginally to $\mathcal{F} = 0.960$ after $K=10$ steps; the nearly indistinguishable pulse waveforms (Figure~\ref{fig:4}(c)) confirm that FOMAML already generalizes well within the task distribution, leaving little room for task-specific correction. The fixed-average baseline tells the opposite story: starting at $\mathcal{F} = 0.881$, it improves substantially to $\mathcal{F} = 0.935$, with visibly restructured pulses (orange). This contrast illustrates the scaling law's core prediction: when the initialization is already near task-specific optima, adaptation gains are minimal; when initialization is poor, adaptation provides large improvements. \newline \newline  
\textbf{Ablation studies.} Appendix~A.4 confirms that $\beta$ scales linearly with learning rate $\eta$ while $A_\infty$ remains independent of optimization hyperparameters.

\subsection{Two-Qubit Entangling Gate}
\label{sec:two_qubit} 
Two-qubit gates present the primary calibration bottleneck in current quantum processors. The CZ gate couples two qubits via a ZZ interaction, doubling the noise channels and tripling the control parameters. \newline 
\begin{figure*}[ht!]
    \centering
    \includegraphics[width=\textwidth]{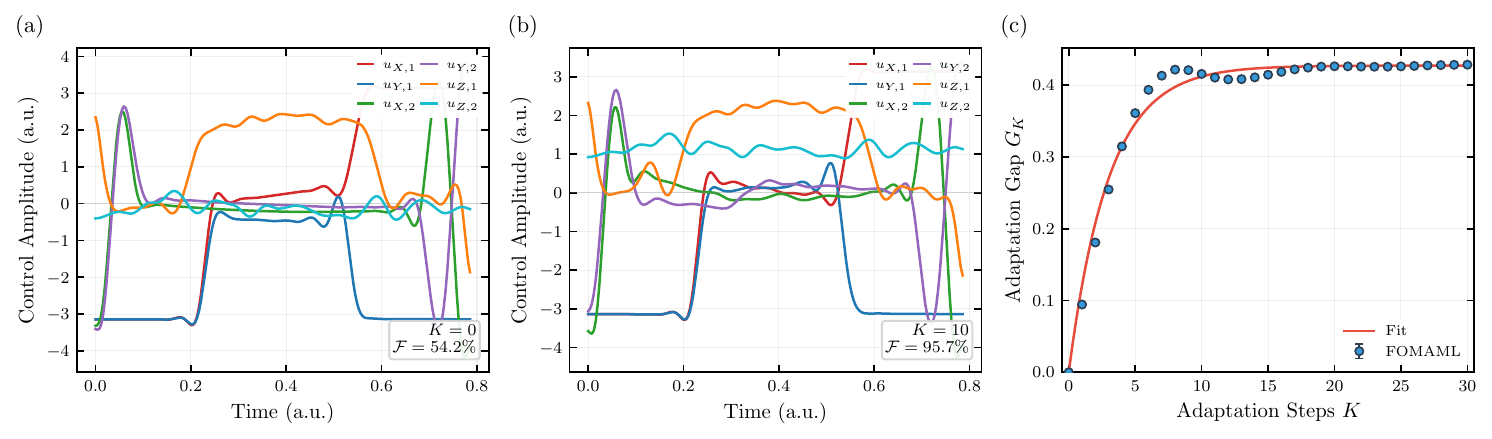}
    \caption{\textbf{Two-qubit CZ gate adaptation under high noise (strong OOD, $10\times$ training noise; consistent with documented worst-case $T_1$ fluctuations on superconducting hardware).} 
    (a)~Meta-initialized control pulses ($K=0$) yield $\mathcal{F}=54.2\%$ gate fidelity 
    in a high-noise environment. 
    (b)~After $K=10$ adaptation steps, task-specific corrections achieve $\mathcal{F}=95.7\%$. 
    (c)~Adaptation gap $G_K$ follows the predicted exponential saturation 
    $G_K = c(1-e^{-\beta K})$ with $R^2=0.986$ and $\beta=0.333$. }
    \label{fig:5}
\end{figure*}

\textbf{High-noise regime.} We deliberately stress-test by inflating noise rates ($\Gamma_{\mathrm{deph}}, \Gamma_{\mathrm{relax}}$) by $10\times$ above typical operating conditions, mimicking a device that has drifted badly out of calibration, or a newly fabricated chip before tune-up. Order-of-magnitude $T_1$ fluctuations documented on production superconducting processors~\citep{carroll2022dynamics,klimov2018fluctuations} place the 10$\times$ regime at one end of a continuous spectrum rather than as an isolated stress test. The two-qubit system has six control fields ($u_{x,1}, u_{y,1}, u_{z,1}, u_{x,2}, u_{y,2}, u_{z,2}$) and richer dynamics due to the ZZ coupling between qubits. Figure~\ref{fig:5}(c) confirms exponential saturation with $R^2 = 0.986$ and fitted rate $\beta = 0.333$, reflecting the richer control landscape. \newline  \newline  
\textbf{Fidelity improvement.} Meta-initialization yields $\mathcal{F}_0 = 54.2\%$ in this high-noise regime (Figure~\ref{fig:5}(a)), far below the fault-tolerance threshold. After $K = 10$ adaptation steps, task-specific pulse corrections achieve $\mathcal{F} = 95.7\%$ (Figure~\ref{fig:5}(b)), a 41.5$\%$ improvement. \newline \newline 
\textbf{Hamiltonian parameter variation.} To test whether the scaling 
law extends beyond noise-rate variation, we conduct a second two-qubit 
experiment where the task parameter is the ZZ coupling strength $J$ 
rather than dissipation rates. This reflects realistic device heterogeneity 
where coupling strengths vary across qubit pairs due to fabrication 
tolerances and it also models modern tunable-coupler architectures where $J$ 
is dynamically adjustable~\citep{yan2018tunable}.  See full details in Appendix A.6. \newline  

\textbf{Control waveform adaptation.} For decoherence-dominated tasks (Figure~\ref{fig:5}), adaptation  under $10\times$ elevated noise transforms the meta-initialized  pulses from $\mathcal{F} = 54.2\%$ to $\mathcal{F} = 95.7\%$.  The most pronounced change is in $u_{z,2}$ (light blue): before adaptation (a), 
this channel remains essentially inactive; after adaptation (b), it exhibits 
oscillatory modulation at an elevated amplitude. 
This active single-qubit $z$-control (dynamic adjustment of qubit frequency), enabled 
experimentally via flux tuning or AC Stark shifts, effectively implements 
dynamical decoupling to refocus low-frequency dephasing noise that would 
otherwise destroy the entangling phase. 

For coupling-dominated tasks (Figure~\ref{fig:a6}(c)--(f)), adaptation produces structurally distinct pulses for each coupling strength $J$. Strong native coupling ($J = 9.0$) requires large  $u_{ZZ}$ amplitudes to compensate by  accumulating sufficient entangling phase, while weak coupling ($J = 1.0$)  demands slowly varying $u_{ZZ}$ waveforms.  
\subsection{Operational Decision Protocol}
\label{sec:fewshot}
Algorithm~\ref{alg:fewshot} converts Theorem~\ref{thm:1} into a pre-adaptation decision rule from a few probe inner-loop steps. 
\begin{algorithm}[h]
\caption{Few-Shot Pre-Adaptation Decision Protocol}
\label{alg:fewshot}
\begin{algorithmic}[1]
\small
\REQUIRE $\theta_0$, probe count $N \in \{3,4,5\}$, target gain fraction $\alpha$ (e.g., $0.95$), benefit threshold $\epsilon$, representative task $\xi$
\STATE Initialize $\theta^{(0)} \leftarrow \theta_0$
\FOR{$k = 1, \ldots, N$}
    \STATE $\theta^{(k)} \leftarrow \theta^{(k-1)} - \eta \nabla_\theta L_\xi(\theta^{(k-1)})$ \hfill(inner-loop step)
    \STATE $G_k \leftarrow L_\xi(\theta_0) - L_\xi(\theta^{(k)})$ \hfill(adaptation gap)
\ENDFOR
\STATE Fit $G_k \approx \hat A_\infty(1 - e^{-\hat\beta k})$ to $\{(k, G_k)\}_{k=1}^{N}$, record $R^2$
\STATE \textbf{PL diagnostic:} if $R^2 < 0.9$, flag departure from local PL
\STATE Compute budget $\hat K_\alpha = \tfrac{1}{\hat\beta}\log\tfrac{1}{1-\alpha}$ \hfill(Corollary~\ref{cor:1})
\STATE \textbf{return} don't adapt if $\hat A_\infty < \epsilon$; else \emph{adapt with budget} $\hat K_\alpha$
\end{algorithmic}
\end{algorithm}
For a 100-qubit processor with daily 30--90 minute calibration windows~\citep{IBMQuantumCalibration}, applying the few-shot protocol per gate avoids both wasted compute on near-converged tasks ($\sigma^2_\tau$ small) and underprovisioned budgets on drifted devices ($\sigma^2_\tau$ large). 

\textbf{Prediction accuracy.} With $N=5$ probe steps, the relative error of $\hat K_{0.95}$ stays within $3$--$19\%$ across $2\times$--$5\times$ OOD. See Appendix~A.8. 

\textbf{Adaptation vs.\ conditioning.} Because $\pi_\theta$ takes $\xi$ as context (Section~\ref{sec:setup}), one might worry the headline gains stem from conditioning rather than adaptation. An ablation crossing $\xi$-access (none / full / $30\%$ noisy) with adaptation (none / $K{=}30$) shows adaptation alone yields $+0.184$ (mild OOD) and $+0.158$ (strong OOD, $10\times$) over a blind baseline, while noisy $\xi$ matches the full method (Figure~\ref{fig:xi_ablation}).  

\section{Discussion}
\label{sec:6}
\textbf{Interpretation.} The scaling law $G_K \geq A_\infty(1 - e^{-\beta K})$ separates ceiling from speed: $A_\infty \propto \sigma^2_\tau$ sets the maximum gain from adaptation (confirmed at $\sigma^2_\tau < 0.002$ where the gap is small), while $\beta = \eta\mu_{\min}$ controls convergence rate (confirmed in Figure~\ref{fig:a4} where $\beta \propto \eta$). The framework extends beyond quantum control to other systems with differentiable dynamics and favorable loss geometry. The linear quadratic regulator (LQR) validation in Appendix~A.3 suggests the scaling law is not quantum-specific. \newline \newline  \textbf{Scaling with system complexity.}  A notable feature of our results is the disparity in adaptation gap magnitude: $\sim$40\% for CZ gates versus $\sim$0.1\% for single-qubit X gates. Three factors contribute: 
(i)~distribution shift, since the two-qubit experiments use 10$\times$ higher noise than training; (ii)~constraint proliferation, with CZ gates optimizing over 12 input states versus two for X gates; and (iii)~control capacity (6 control dimensions offer more directions). \newline  \newline    
\textbf{Limitations.} We address two types of limitations: \newline 
\textit{(i) Theoretical Assumptions:} Our analysis relies on local PL near task optima. Under large distribution shift this can fail, manifesting as low $R^2$ of the fit or small $\mu$ (Figure~\ref{fig:a4}(b) at high $\eta$). Algorithm~\ref{alg:fewshot} flags this in deployment ($R^2 < 0.9$ on probe steps). The KL polynomial-decay form (Appendix~C.2) provides a fallback guarantee. 

\textit{(ii) Simulation Requirement:} Our framework requires gradients through the dynamics, which is not feasible on hardware. More broadly, the simulate-then-deploy paradigm presumes the simulator captures the dominant noise channels: finite-shot noise, intra-episode drift, and imperfect knowledge of $\xi$ all introduce a simulation-to-hardware gap. These effects can be absorbed into an effective parameters.

\section{Conclusion} 
From optimization geometry, we derived scaling laws predicting when adaptation outperforms non-adaptive baselines, and converted them into a few-shot decision protocol for adaptive quantum control. The law gives actionable guidance: adaptation wins when device variance is high and budget sufficient. For quantum scalable processors, this turns calibration from a fixed cost into a budgeted decision per gate. Extending these scaling laws to experimental quantum hardware systems is left for important future work.

\section*{Impact Statement}

This work bridges the gap between meta-learning theory and the operational realities of quantum computing. By deriving physics-informed scaling laws, we provide a blueprint for applying machine learning to control systems where hardware variation is the dominant constraint. Beyond quantum control, our framework for analyzing the ``adaptation gap,'' linking task variance ($\sigma^2_\tau$) to meta-learning returns ($G_K$) via the PL condition, offers a generalizable template for evaluating meta-learning utility in other physical domains, such as robotics (sim-to-real transfer) and fusion reactor control. 

Regarding broader impacts: quantum computing poses known risks, particularly to cryptographic infrastructure. However, this work addresses calibration efficiency rather than computational capability, and the scaling laws we derive are agnostic to the end application. The primary near-term effect is reducing energy consumption in calibration workflows. While our specific contribution targets calibration efficiency rather than computational capability, faster and cheaper calibration is a general-purpose enabler that lowers the cost of all downstream quantum applications, including those with cryptanalytic relevance. We see no near-term mechanism by which calibration speedups would unilaterally accelerate any specific harmful application, but acknowledge that calibration is a shared infrastructure layer.

\section*{Acknowledgments}
Approved for Public Release; Distribution Unlimited. Public Release Case Number 26-1084. \copyright 2026 The MITRE Corporation. All rights reserved. This work is supported by the MITRE Independent Research and Development Program. We thank Taylor Patti (NVIDIA), Murphy Niu (UCSB), Victor Batista (Yale), and colleagues at NVIDIA for fruitful discussions. 
 
\bibliography{example_paper_corrected}
\bibliographystyle{icml2025}

\newpage
\appendix
\setcounter{equation}{0}
\renewcommand{\theequation}{A.\arabic{equation}} 

\setcounter{table}{0}
\renewcommand{\thetable}{A.\arabic{table}}

\setcounter{figure}{0}
\renewcommand{\thefigure}{A.\arabic{figure}}  
\onecolumn

\makeatletter
\makeatother
\section*{Appendix}  
\renewcommand{\thesection}{A.\arabic{section}}
\setcounter{section}{0}
\section*{Appendix A: Additional Experimental Results}
\label{app:a}
We provide additional experimental results that complement the main text. Our experiments focus on single-qubit and two-qubit gates rather than many-body control problems. This scope is deliberate: universal quantum computation factorizes into these primitive operations, and practical calibration workflows tune gates individually or pairwise. The following sections provide training diagnostics, hyperparameters, ablation studies, baseline comparisons, and extended validation on classical control and tunable-coupler architectures.    
 
\subsection*{A.1: Training Dynamics}
\label{app:a1}
We first examine the stability and convergence properties of the meta-training procedure. Figure~\ref{fig:a1} presents four diagnostic metrics tracked over 2,000 meta-iterations. 

Several observations confirm healthy meta-training dynamics. First, the training loss decreases monotonically over approximately two orders of magnitude, with no evidence of instability or divergence (panel a). Second, the validation metrics stabilize after roughly 500 iterations: both pre- and post-adaptation losses plateau, and the adaptation gap converges. 

This gap magnitude aligns with our scaling law predictions given the task variance used in training. Third, the gradient norm decays smoothly without explosions or oscillations (panel c).

\begin{figure}[ht!]
    \centering
    \includegraphics[width=\linewidth]{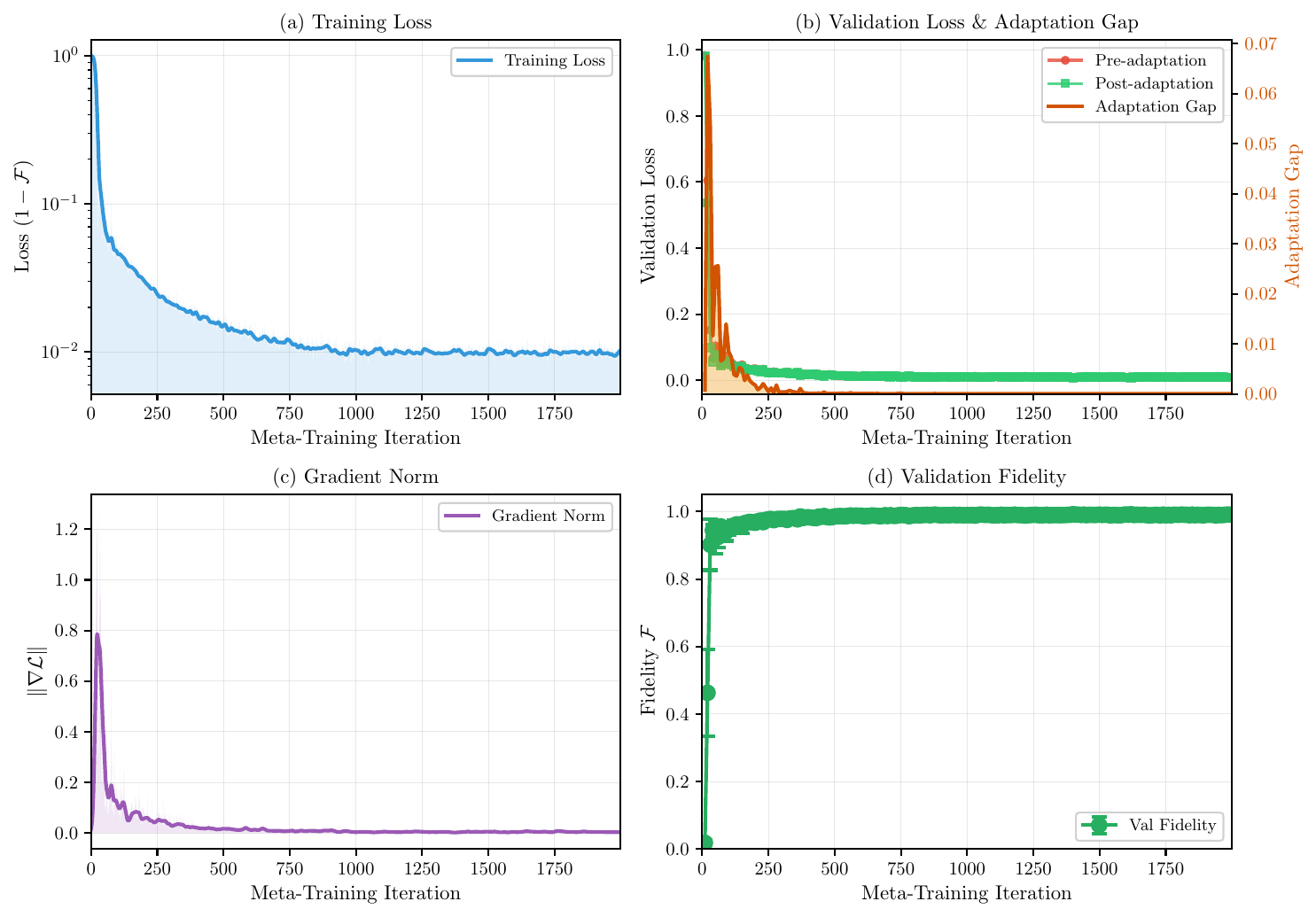}
\caption{\textbf{Meta-training dynamics (X gate).} (a) Training loss decreases over two orders of magnitude with stable convergence. (b) Validation loss (pre-and post-adaptation) and adaptation gap stabilize after ${\sim}500$ iterations, consistent with scaling law predictions. (c) Gradient norm decays smoothly without explosions, indicating stable optimization. (d) Validation fidelity reaches $\mathcal{F} > 0.98$ with low variance across tasks, confirming the meta-initialization generalizes well. }
    \label{fig:a1}
\end{figure}

\subsection*{A.2: Hyperparameters and Fitting Parameters} 
\begin{table}[h]
\centering
\caption{Experimental settings and fitted scaling law parameters.}
\label{tab:experiment_summary}
\begin{tabular}{llccc}
\toprule
\textbf{Category} & \textbf{Parameter} & \textbf{Single-Qubit (X) [Figure~\ref{fig:3}]} & \textbf{Two-Qubit (CZ) [Figure~\ref{fig:5}]} & \textbf{LQR} \\
\midrule
Training Distribution 
& $\Gamma_{\text{deph}}$ range & $[0.02, 0.15]$ & $[0.0001, 0.001]$ & --- \\
& $\Gamma_{\text{relax}}$ range & $[0.01, 0.08]$ & $[5\times10^{-5}, 0.0005]$ & --- \\
& Distribution & Uniform & Uniform & Gaussian \\
\midrule
Adaptation Distribution 
& $\Gamma_{\text{deph}}$ range & $[0.02, 0.15]$ & $[0.001, 0.01]$ & --- \\
& $\Gamma_{\text{relax}}$ range & $[0.01, 0.08]$ & $[0.0005, 0.005]$ & --- \\
& Distribution & Uniform & Uniform & Gaussian \\
& OOD (10$\times$ noise) & No & Yes & No \\
\midrule
Scaling Law Fit 
& $c$ (asymptote) & 0.00132  & 0.4273  & 0.0029  \\
& $\beta$ (rate) & 0.0834 & 0.333 & 0.042 \\
& $R^2$ & 0.9998 & 0.9856 & 0.999 \\ 
\bottomrule
\end{tabular}
\end{table}
\begin{table}[ht!]
\centering
\caption{Meta-training hyperparameters for single-qubit and two-qubit experiments.}
\label{tab:hyperparameters}
\begin{tabular}{lcc}
\toprule
\textbf{Hyperparameter} & \textbf{Single-Qubit (X Gate)} & \textbf{Two-Qubit (CZ Gate)} \\
\midrule
\multicolumn{3}{l}{\textit{Policy Architecture}} \\
Task features &  $(\frac{\Gamma_{\text{deph}}}{0.1}, \frac{\Gamma_{\text{relax}}}{0.05} , \frac{\Gamma_{\text{deph}} + \Gamma_{\text{relax}}}{0.15})$ & $(\frac{\Gamma^i_{\text{deph}}}{0.1}, \frac{\Gamma^i_{\text{relax}}}{0.05})$ ($i = 1,2$)  \\
Task feature dimension & 3 & 4 \\
Hidden dimension & 128 & 256 \\
Hidden layers & 2 & 4 \\
Control segments & 60 & 30 \\
Number of controls & 2 ($u_x$, $u_y$) & 6 ($u_{x,1}$, $u_{y,1}$, $u_{x,2}$, $u_{y,2}$, $u_{z,1}$, $u_{z,2}$) \\
Activation & Tanh & Tanh \\
Output scale & 1.0 & $\pi$ \\
\midrule
\multicolumn{3}{l}{\textit{MAML Training}} \\
Meta-iterations & 2000 & 2000 \\
Tasks per batch & 32 & 4 \\
Inner-loop steps & 5 & 3 \\
Inner-loop learning rate ($\eta_{\text{in}}$) & 0.01 & 0.05 \\
Meta learning rate ($\eta_{\text{out}}$) & 0.001 & 0.001 \\
Optimizer (Outer Loop) & Adam & AdamW \\
Optimizer (Inner Loop) & SGD & SGD \\
Weight decay & 0 & $10^{-4}$ \\
LR scheduler & None & Cosine annealing \\
Gradient clipping & 1.0 & 1.0 \\
\midrule
\multicolumn{3}{l}{\textit{Simulation}} \\
Integration method & RK4 & RK4 \\
Integration timestep ($\Delta t$) & 0.005 & 0.01 \\
Gate time ($T$) & 1.0 & $\pi/4 \approx 0.785$ \\
\bottomrule
\end{tabular}
\end{table}
\begin{table}[ht!]
\centering
\caption{Summary of adaptation regimes across experiments.}
\label{tab:ood_summary}
\begin{tabular}{lccc}
\toprule
\textbf{Figure} & \textbf{Adaptation Regime} & \textbf{OOD Factor} & \textbf{Purpose} \\
\midrule
Fig.~\ref{fig:3}a & In-distribution & $1\times$ & Validate exponential saturation \\
Fig.~\ref{fig:3}b & Variable & $0.01$--$5\times$ & Validate $A_\infty \propto \sigma^2_\tau$ scaling \\
Fig.~\ref{fig:4} & Mild OOD & $\sim1.1\times$ & Compare initialization strategies \\
Fig.~\ref{fig:5} & Strong OOD & $10\times$ & Stress-test adaptation limits \\
\bottomrule
\end{tabular}
\end{table}

\newpage 
\subsection*{A.3: Classical Example} 
To demonstrate that our scaling law arises from optimization geometry rather than quantum-specific physics, we validate the framework on a classical linear-quadratic regulator (LQR) problem. This serves as a sanity check: if the exponential-linear structure emerges in a simple classical system with known analytical properties, it strengthens the claim that our results apply broadly to differentiable control.\newline  
\begin{figure}[ht!]
    \centering
    \includegraphics[width=\linewidth]{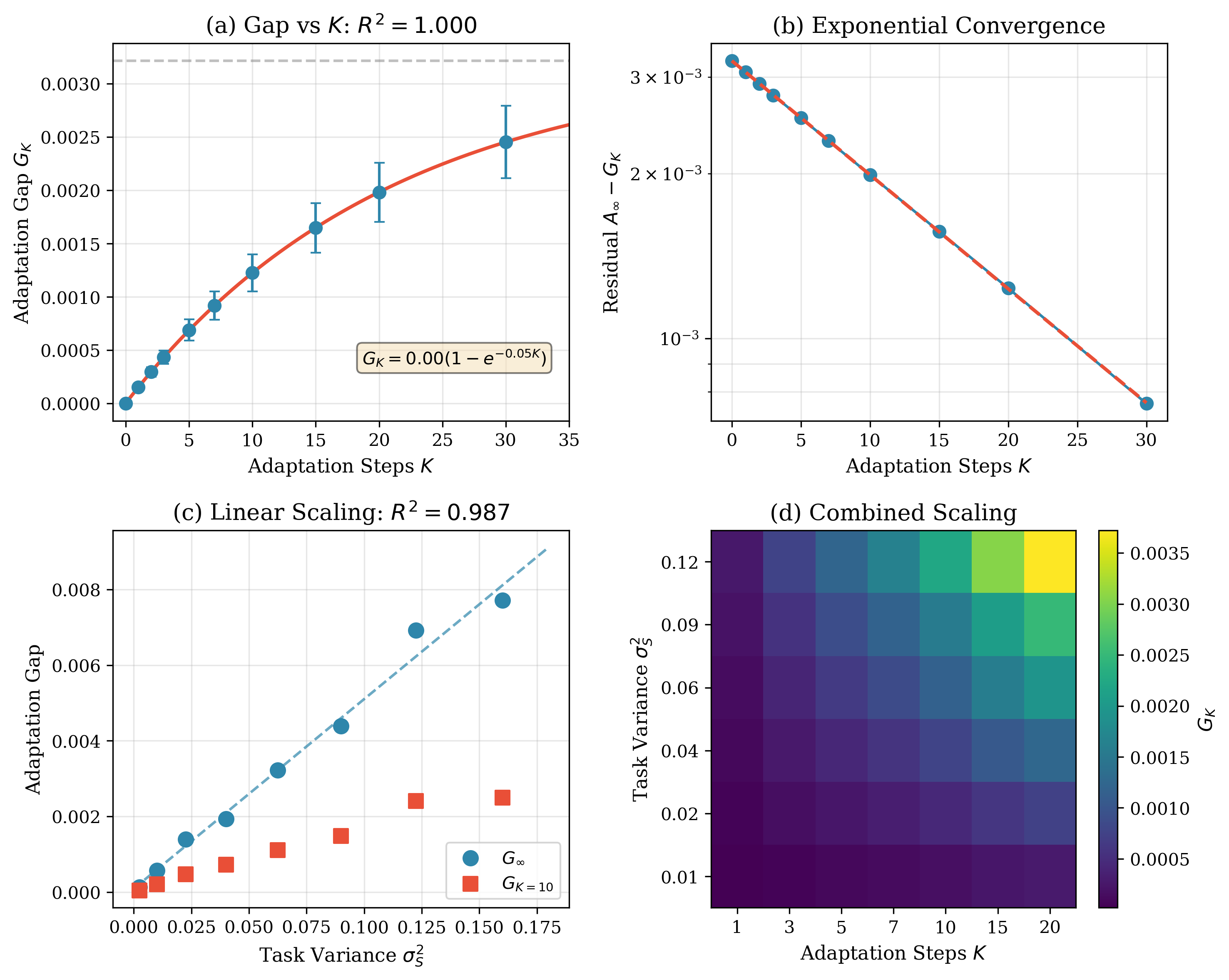}
\caption{\textbf{Classical LQR validation.} (a) Adaptation gap versus $K$ achieves  $R^2 > 0.99$, demonstrating near-perfect exponential saturation on the mass-spring-damper system. (b) Log-scale residual $A_\infty - G_K$ confirms exponential convergence. (c) Linear scaling of $G_\infty$ with task variance ($R^2 = 0.987$); both asymptotic gap (blue) and finite-$K$ gap (red, $K{=}10$) follow the predicted relationship. (d) Combined scaling law: heatmap shows adaptation gap increases with both $K$ and $\sigma^2_\tau$, confirming the full structure $G_K \propto \sigma^2_\tau(1 - e^{-\beta K})$.} \label{fig:a2}
\end{figure}

\textbf{System.} We consider a mass-spring-damper system $m\ddot{x} + c\dot{x} + kx = u$ where damping $c = 0.5$ and stiffness $k = 2.0$ are fixed, while mass $m \sim \mathcal{N}(1.0, \sigma_m^2)$ varies across tasks. The controller minimizes the infinite-horizon cost $J = \int_0^\infty (x^\top Q x + u^\top R u)\, dt$ with $Q = \text{diag}(1.0, 0.1)$ and $R = 0.1$.

\textbf{Setup.} The non-adaptive baseline solves the Riccati equation~\citep{uddin2019riccati} for the mean system ($m = 1.0$), yielding a fixed gain $K_{\text{rob}}$. For each task, adaptation performs gradient descent on the LQR cost starting from $K_{\text{rob}}$.

The LQR results in Figure~\ref{fig:a2} provide confirmation of the scaling law's generality. 

One methodological difference from the quantum experiments merits discussion. Here, the non-adaptive baseline (the Riccati solution for the mean system) serves directly as the initialization for adaptation, rather than a separately meta-learned policy. This isolates the role of the PL condition and control separation in driving the scaling behavior, abstracting away meta-training dynamics. The fact that the same functional form emerges validates that our scaling law captures fundamental optimization geometry rather than artifacts of the MAML training procedure.

\subsection*{A.4: Ablations}
\paragraph{Task variance to loss variance connection.} A key step in our theoretical development connects physical task variance $\sigma_\tau^2$ to loss landscape geometry $\sigma_L^2$. Theorem~\ref{thm:1} assumes these quantities scale proportionally. If task parameters vary, the optimal losses for different tasks should also vary, creating the suboptimality that adaptation can exploit. We validate this assumption empirically in Figure~\ref{fig:a3}.

The strong linear relationship ($R^2 = 0.987$) confirms that task parameter variance reliably predicts optimal loss variance across the range of noise conditions studied.

This empirical validation is important because the $\sigma_\tau^2 \to \sigma_L^2$ connection is invoked in the proof of the main theorem (Appendix ~C, Part A) but is difficult to establish rigorously without restrictive assumptions on the task distribution and loss landscape geometry. The strong empirical correlation ($R^2 = 0.987$) justifies our use of $\sigma_\tau^2$ as the independent variable in Figure~\ref{fig:3}(b) of the main text.\newline \newline 

\begin{figure}[ht!]
    \centering
    \includegraphics[width=0.5\linewidth]{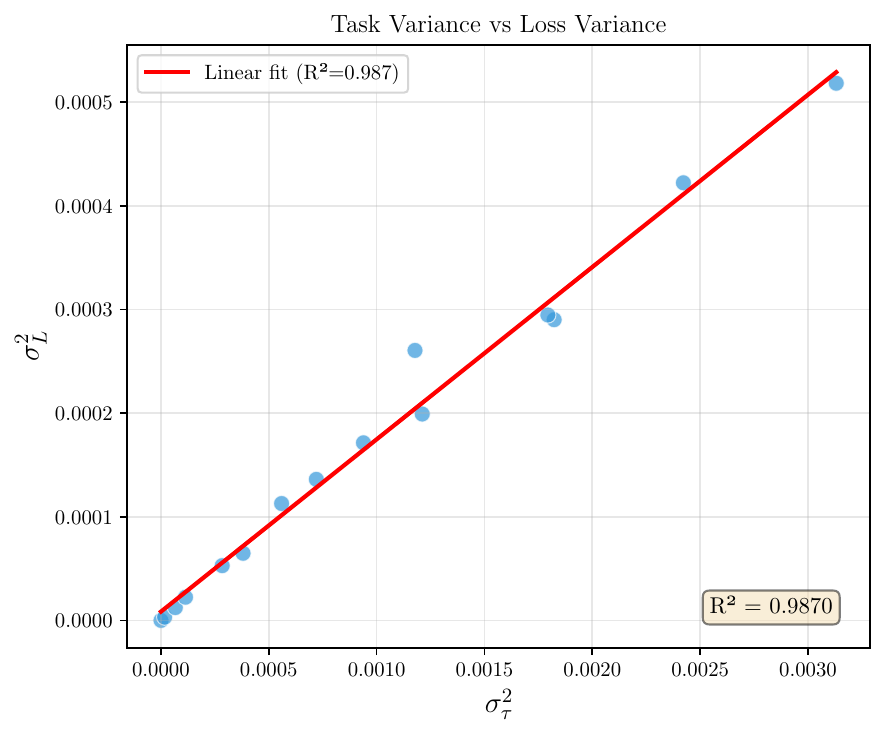}
\caption{\textbf{Empirical validation of the $\sigma^2_\tau \to \sigma^2_L$ connection.} Task parameter variance $\sigma^2_\tau$ predicts optimal loss variance $\sigma^2_L$ with $R^2 = 0.987$, confirming the linear relationship assumed in Theorem~\ref{thm:1} that connects physical task variance to loss landscape geometry.}
    \label{fig:a3}
\end{figure}
\newpage

\textbf{Learning rate sensitivity.} We investigate how the inner-loop learning rate $\eta$ affects adaptation dynamics. Our theory predicts that $\beta = \eta \mu_{\min}$, implying the adaptation rate should scale linearly with learning rate while the asymptotic gap $A_\infty$ remains independent of $\eta$ (since $A_\infty$ depends only on task variance $\sigma_\tau^2$). Figure~\ref{fig:a4} tests these predictions.

Panel (a) confirms the predicted behavior: higher learning rates (yellow curves) achieve faster saturation while all curves converge to similar asymptotic gaps. This separation between rate and ceiling is a key prediction of Theorem~\ref{thm:1}. The learning rate controls how quickly you approach the ceiling, but the ceiling itself is set by task variance.

Panel (b) quantifies the relationship between $\eta$ and the fitted adaptation rate $\beta$. For $\eta \leq 2 \times 10^{-2}$, we observe approximately linear scaling, consistent with $\beta = \eta \mu_{\min}$. At higher learning rates, $\beta$ begins to plateau and eventually decrease. This departure from linearity indicates that gradient descent overshoots the basin where the local PL condition holds.

The rollover point provides practical guidance: learning rates beyond this threshold waste computation without accelerating adaptation.

These ablations have practical implications for hyperparameter selection. If computational budget limits the number of inner-loop steps $K$, practitioners should increase $\eta$ up to the linear regime boundary to maximize adaptation speed. Conversely, if stability is paramount (e.g., on real hardware with noisy gradients), lower learning rates provide more reliable convergence at the cost of requiring more adaptation steps.
\begin{figure}[ht!]
    \centering
    \includegraphics[width=\linewidth]{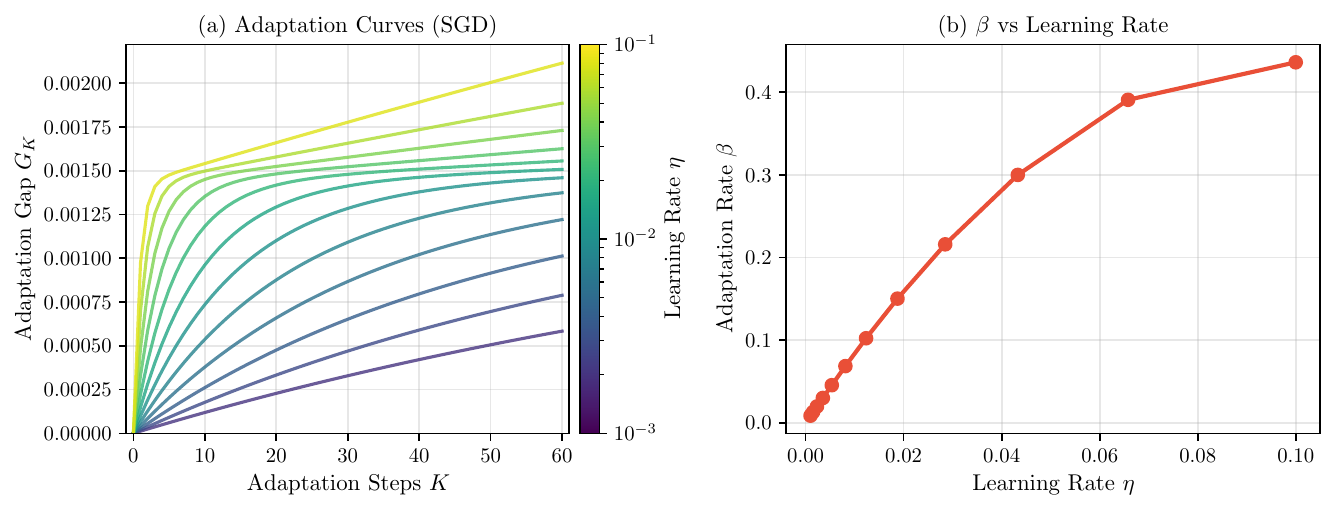}
\caption{\textbf{Learning rate sensitivity.} (a) Adaptation curves for different inner-loop learning rates $\eta$: higher $\eta$ (yellow) yields faster saturation while lower $\eta$ (purple) requires more steps. All curves converge to similar asymptotic gaps $A_\infty$, confirming that $A_\infty$ is independent of learning rate as predicted. (b) Fitted adaptation rate $\beta$ scales approximately linearly with $\eta$ for $\eta \leq 2 \times 10^{-2}$; the decrease at higher learning rates indicates departure from the local PL regime where gradient descent overshoots.}
    \label{fig:a4}
\end{figure}

\newpage 
\subsection*{A.5: Comparison to GRAPE Baseline}  

We compare our meta-learning approach against GRAPE (Gradient Ascent Pulse Engineering), the standard method for quantum optimal control \citep{khaneja2005optimal}. This comparison addresses whether the benefits of meta-learned adaptation justify its overhead relative to established pulse optimization techniques.

We consider four strategies: (1) \textit{non-adaptive GRAPE}, which optimizes a single pulse sequence for the mean task distribution; (2) \textit{per-task GRAPE}, which optimizes from scratch for each individual task; (3) \textit{FOMAML (K=0)}, the meta-learned initialization before any task-specific adaptation; and (4) \textit{FOMAML (K=10)}, the meta-learned policy after 10 gradient adaptation steps.

Figure~\ref{fig:a5}(a) shows that per-task GRAPE achieves the highest gate fidelity (99.3\%), but requires warm-starting from a non-adaptive baseline and 200 optimization steps per task. In contrast, FOMAML's meta-initialization alone ($K$=0) achieves 98.9\% fidelity with zero per-task optimization for tasks within the training distribution, and additional adaptation steps ($K$=10) provide negligible improvement.

Figure~\ref{fig:a5}(b) reveals the practical advantage: to match FOMAML's zero-shot performance of 98.9\%, GRAPE from scratch requires over 150 iterations and still falls short, reaching only 97.5\%. This gap reflects a fundamental limitation: GRAPE optimizes pulses for a single task instance, whereas FOMAML learns an initialization that generalizes across the task distribution. Consequently, FOMAML can be deployed immediately on new tasks without per-task optimization, while GRAPE must re-optimize from scratch for each new noise configuration. This efficiency gain is critical for addressing the calibration bottleneck, where frequent recalibration demands rapid deployment rather than lengthy re-optimization.

These results are consistent with our theoretical framework. Per Corollary~\ref{cor:1}, adaptation outperforms non-adaptive models when task variance $\sigma^2_\tau$ is non-negligible and sufficient adaptation steps $K > K^*$ are taken. 

\begin{figure}[H]
    \centering
    \includegraphics[width=\textwidth]{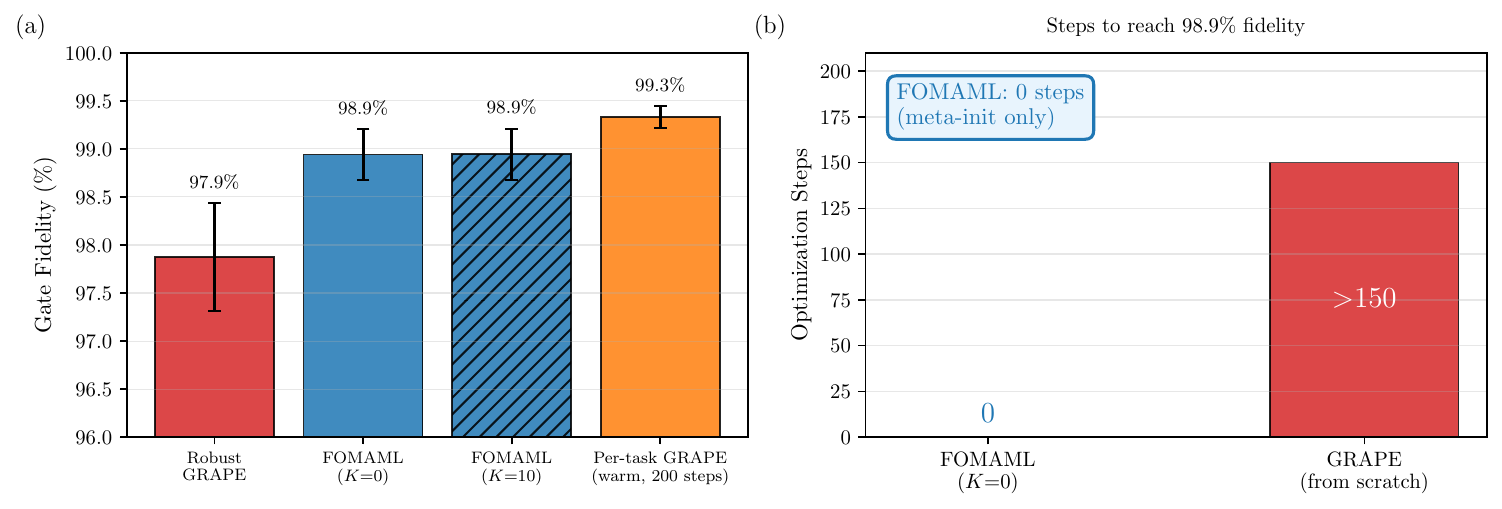}
    \caption{ \textbf{Comparison to GRAPE baseline.} (a) Gate fidelity across control strategies: non-adaptive GRAPE optimized for the mean task distribution (97.9\%), FOMAML meta-initialization with zero adaptation steps (98.9\%), FOMAML after $K$=10 adaptation steps (98.9\%), and per-task GRAPE warm-started with 200 optimization steps (99.3\%). Notably, FOMAML's meta-initialization alone matches the performance of 10 adaptation steps, indicating the primary benefit comes from meta-training rather than per-task adaptation. (b) Optimization efficiency to reach 98.9\% fidelity: FOMAML achieves this with zero per-task optimization steps (meta-initialization only), while GRAPE from scratch fails to reach this threshold even after 150 iterations (achieving only 97.5\%). Per-task GRAPE requires warm-starting and 200 steps to surpass FOMAML. This highlights that the meta-learned initialization provides a superior starting point that GRAPE cannot match through optimization alone, directly addressing the calibration bottleneck where rapid deployment without per-device optimization is valuable.}
    \label{fig:a5}
\end{figure}

\newpage

\subsection*{A.6: Tunable Coupler CZ Gate}

To demonstrate that our scaling laws generalize beyond noise-rate variation to Hamiltonian parameter variation, we validate on two-qubit CZ gates with tunable ZZ coupling.

\paragraph{Physical motivation.} Modern superconducting quantum processors employ tunable couplers to enable high-fidelity two-qubit gates while suppressing always-on ZZ interactions during idle periods~\cite{yan2018tunable}. In these architectures, the effective coupling strength $J$ between qubits can be dynamically adjusted via flux-tunable coupler elements. However, the achievable coupling range and residual ZZ interactions vary across qubit pairs due to:
\begin{itemize}
    \item \textbf{Fabrication variation:} Junction asymmetries and lithographic tolerances cause $\pm 10$--$20\%$ variation in coupling strengths across nominally identical qubit pairs on the same chip.
    \item \textbf{Frequency crowding:} In multi-qubit processors, qubit frequencies must be staggered to avoid collisions, leading to different detunings and effective couplings for each pair.
\end{itemize}

\paragraph{System Hamiltonian.} We extend the two-qubit Hamiltonian (Eq.~\ref{eq:two_qubit_ham}) with a controllable coupling term:
\begin{equation}
    H(t) = J \, \sigma^z_1 \otimes \sigma^z_2 + u_{ZZ}(t) \, \sigma^z_1 \otimes \sigma^z_2 + \sum_{j=1}^{2} \left[ u_{x,j}(t)\sigma^x_j + u_{y,j}(t)\sigma^y_j + u_{z,j}(t)\sigma^z_j \right],
    \label{eq:tunable_coupler_H}
\end{equation}
which simplifies to:
\begin{equation}
    H(t) = \bigl(J + u_{ZZ}(t)\bigr) \sigma^z_1 \otimes \sigma^z_2 + \sum_{j=1}^{2} \left[ u_{x,j}(t)\sigma^x_j + u_{y,j}(t)\sigma^y_j + u_{z,j}(t)\sigma^z_j \right].
\end{equation}
Here $J$ is the static (task-dependent) coupling strength and $u_{ZZ}(t)$ is a time-dependent control field that modulates the effective coupling. This yields seven control channels: $\{u_{x,1}, u_{y,1}, u_{z,1}, u_{x,2}, u_{y,2}, u_{z,2}, u_{ZZ}\}$.

\paragraph{Task distribution.} Tasks are defined by the static coupling strength $J \in \{1.0, 3.0, 6.0, 9.0\}$ (arbitrary units), sampled uniformly. This range reflects realistic device heterogeneity where coupling strengths vary by factors of $2$--$3\times$ across qubit pairs. Unlike the noise-variation experiments, dissipation rates are held fixed at $\Gamma_{\mathrm{deph}} = 0.005$ and $\Gamma_{\mathrm{relax}} = 0.0025$ for both qubits.

\paragraph{Control strategy.} The key insight is that $u_{ZZ}(t)$ allows the controller to \emph{compensate} for unknown static coupling $J$. For weak native coupling ($J = 1.0$), the controller must supply additional ZZ interaction via $u_{ZZ}(t) > 0$ to accumulate sufficient entangling phase. For strong native coupling ($J = 9.0$), the controller may need to partially cancel the interaction or modulate timing to avoid over-rotation. This creates a natural test of control separation (Lemma~\ref{lem:3}): different values of $J$ require qualitatively different control strategies.

\paragraph{Results.} Figure~\ref{fig:a6} shows that:
\begin{enumerate}
    \item The adaptation gap follows exponential saturation from $\mathcal{F}_0 = 0.41$ to $\mathcal{F} \approx 0.99$ after $K = 30$ steps.
    \item Adapted $u_{ZZ}$ pulses exhibit task-specific structure: weak coupling requires large corrective amplitudes while strong coupling requires dynamic modulation.
    \item The full seven-channel control waveforms (panels c to f) demonstrate control separation: each value of $J$ yields a distinct optimal pulse sequence.
\end{enumerate}\newpage

\begin{figure*}[ht!]
    \centering
    \includegraphics[width=\textwidth]{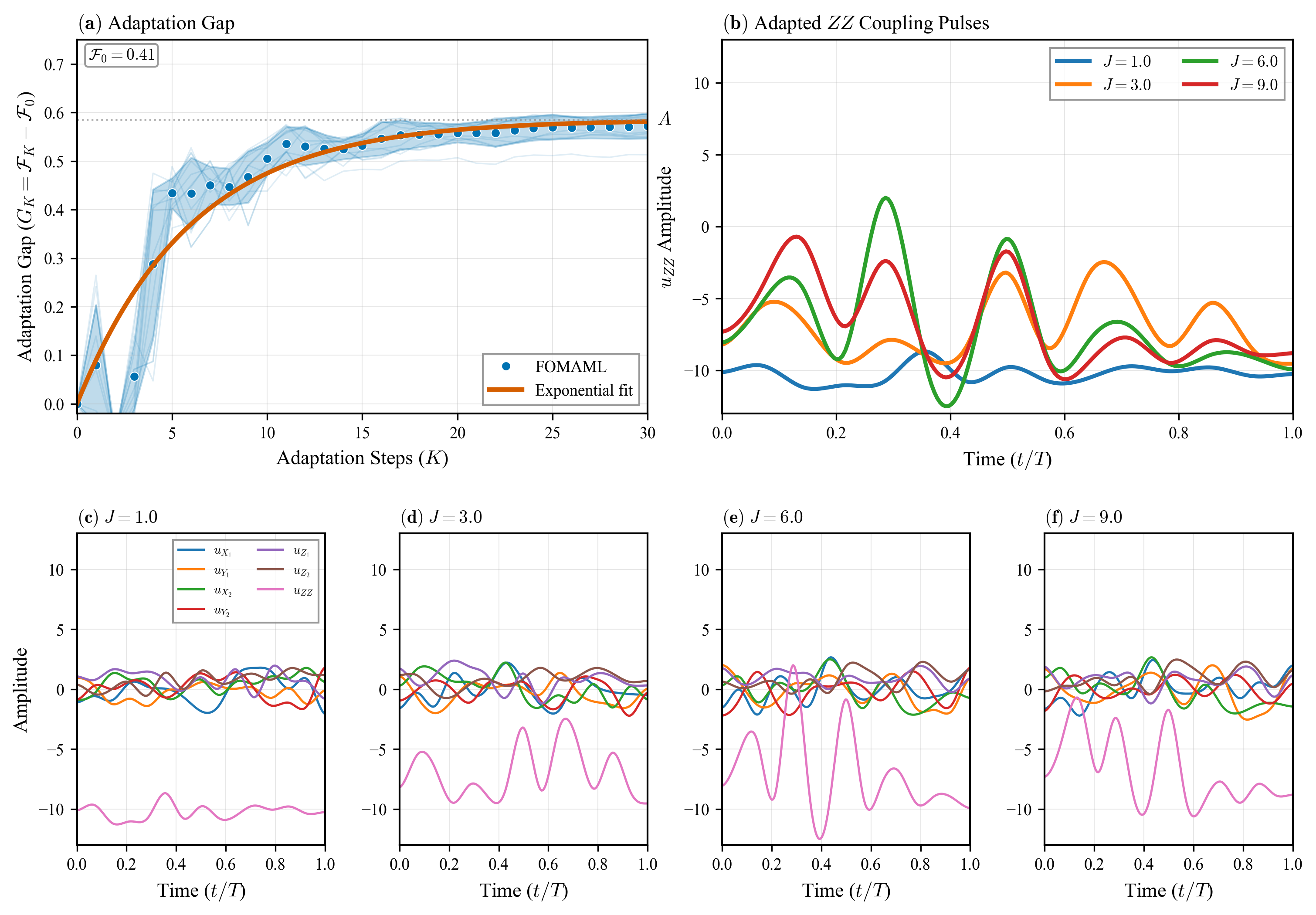}
    \caption{\textbf{Scaling law validation on Hamiltonian parameter variation.} 
Two-qubit CZ gate with tunable coupling strength $J \in \{1.0, 3.0, 6.0, 9.0\}$ 
as the task parameter. (a)~Adaptation gap follows exponential saturation from 
$\mathcal{F}_0 = 0.41$ to $\mathcal{F} \approx 0.99$. (b)~Adapted $u_{ZZ}$ pulses show task-specific structure: weak 
coupling ($J = 1.0$) requires maximal correction while strong coupling 
requires dynamic modulation. (c--f)~Full seven-channel control waveforms for each $J$ value, 
demonstrating control separation across Hamiltonian parameter variation. }
\label{fig:a6}
\end{figure*}
These results confirm that the scaling law $\sigma^2_\tau(1 - e^{-\beta K})$ holds for Hamiltonian parameter variation, not just noise-rate variation.\clearpage

\subsection*{A.7: PL Condition Breadth Across Tasks}
\label{app:a7}
The PL constant $\mu(\xi)$ governs the rate $\beta = \eta\mu_{\min}$ in Theorem~\ref{thm:1}, so understanding how $\mu$ varies across the task distribution and under distribution shift is essential for assessing where the scaling law applies.

We characterize the PL condition's breadth across the task distribution by computing $\mu(\xi)$ for each task from the locally linear regime of $\frac{1}{2}\|\nabla L\|^2$ vs.\ $L - L^*$ near optimum.\footnote{Aggregate $\mu$ values use a fixed loss-gap window normalized per task and so differ in scale from the example $\mu \approx 0.03$ in Figure~\ref{fig:2}(a), which is fit over a wider window.}

\begin{figure}[h]
    \centering
    \includegraphics[width=\linewidth]{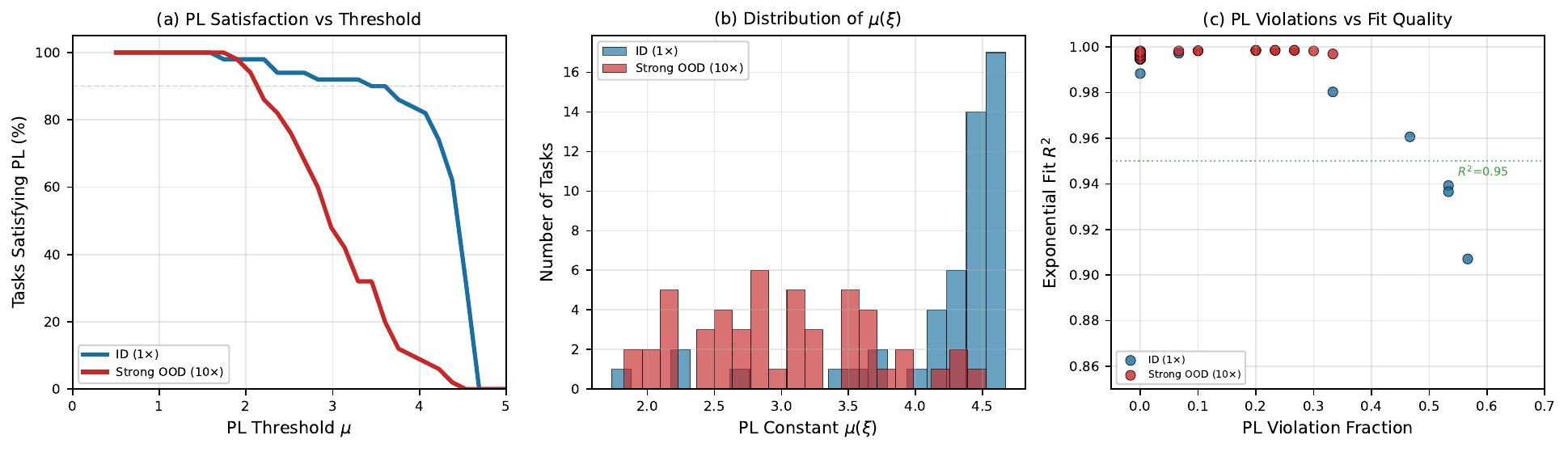}
    \caption{\textbf{PL breadth across the task distribution.} (a)~Fraction of tasks satisfying PL above threshold $\mu$, for ID ($1\times$) and strong OOD ($10\times$). (b)~Distribution of $\mu(\xi)$: ID concentrates near $\mu \approx 4.4$; strong OOD shifts toward smaller $\mu$. (c)~Exponential-fit $R^2$ vs.\ PL violation fraction: even at $60\%$ violation, the median fit retains $R^2 > 0.95$.}
    \label{fig:pl_breadth}
\end{figure}

PL satisfaction is universal at moderate thresholds for ID and remains high under strong OOD (panel a). Median $\mu(\xi)$ shifts from $\approx 4.4$ to $\approx 2.9$ under strong OOD (panel b). Most importantly, the scaling-law fit $R^2 > 0.95$ across all observed PL violation rates (panel c).


\textbf{Takeaways.} \emph{(i) Universal satisfaction at moderate threshold.} Panel~(a) shows that PL holds for essentially all tasks at thresholds $\mu \lesssim 1.5$ in both ID and strong OOD regimes, validating Assumption~\ref{ass:3} as a working operating-regime condition rather than an idealization. \emph{(ii) Graceful curvature degradation under shift.} Panel~(b) shows the $\mu(\xi)$ distribution shifts from a tight ID concentration near $\mu \approx 4.4$ toward a broader strong-OOD distribution with median $\mu \approx 2.9$ and a heavier left tail. The shift is gradual, not catastrophic: optimization geometry weakens under distribution shift but does not break. \emph{(iii) Robustness of the scaling-law form.} Panel~(c) plots the exponential-fit $R^2$ on $G_K$ against the PL violation fraction (computed by sweeping the threshold $\mu^*$ and recording, for each threshold, the fraction of tasks falling below it). Across both regimes, $R^2$ remains above $0.9$ even when more than half the tasks violate the strict PL threshold, confirming that the scaling-law form persists outside the strict PL basin. The slightly lower $R^2$ values for ID points relative to strong OOD reflect a signal-to-noise effect---absolute gap magnitudes in the ID regime are small ($A_\infty \sim 10^{-3}$), so fixed-amplitude residuals appear more prominently in the relative fit metric---rather than any degradation of the scaling law in the well-behaved regime. The robustness in panel~(c) is consistent with the polynomial-decay generalization derived under the weaker Kurdyka--{\L}ojasiewicz condition.

\textbf{Practical diagnostic.} For new systems, PL validity is assessed from the first 3--5 probe steps used by Algorithm~\ref{alg:fewshot}: $R^2 < 0.9$ on the exponential fit signals departure from the local PL basin and unreliable budget predictions. \clearpage
 
\subsection*{A.8: Few-Shot Decision Protocol}
\label{app:a8}
The few-shot protocol (Algorithm~\ref{alg:fewshot}) hinges on whether the exponential model $G_K \approx A_\infty(1 - e^{-\beta K})$ can be reliably fit from very few probe steps. We characterize this in two ways: (i) visually, by overlaying few-shot fits on the full-data curve as the probe budget $N$ grows; and (ii) numerically, by reporting the relative error of $\hat K_{0.95}$ across out-of-distribution severity. Figure~\ref{fig:fewshot_appendix} and Table~\ref{tab:fewshot_full} report both views.

\begin{figure}[h]
    \centering
    \includegraphics[width=\linewidth]{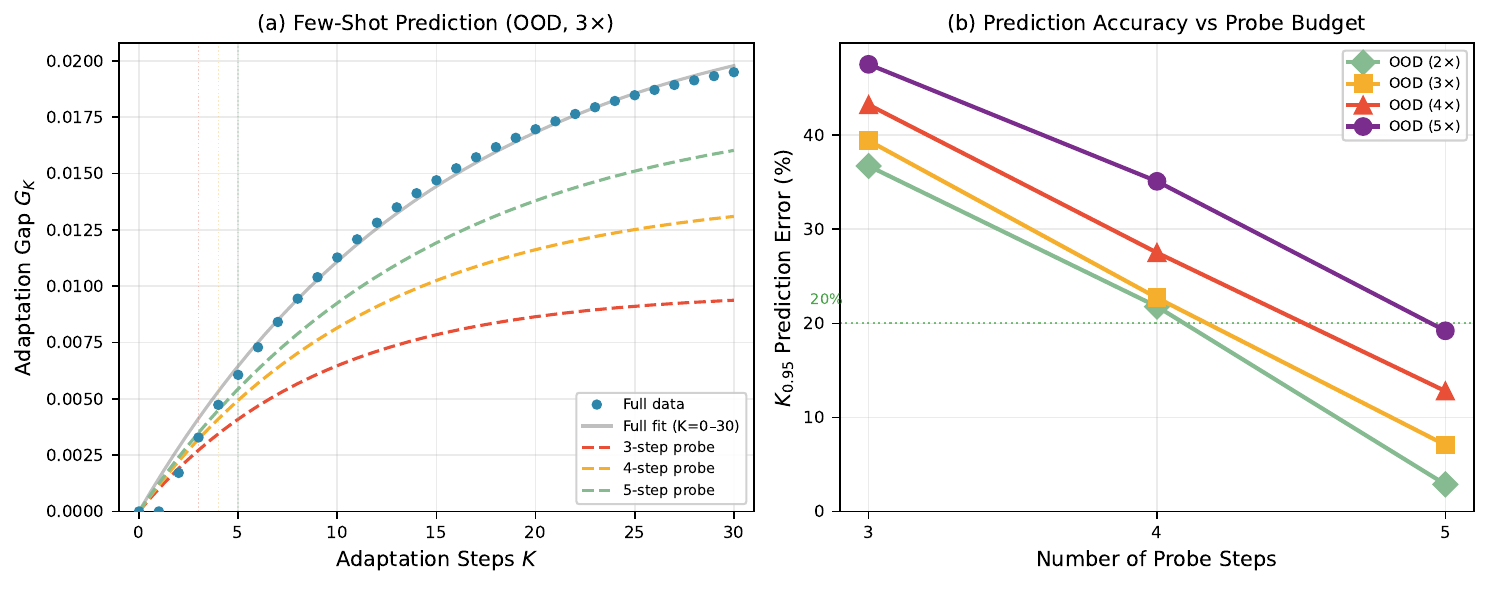}
    \caption{\textbf{Few-shot pre-adaptation prediction.} (a)~On an OOD ($3\times$) condition, $N$-step probe fits (dashed) converge to the full-data fit (gray) as $N$ grows. (b)~Relative error of $\hat K_{0.95}$ across OOD severity; error decays roughly $5\times$ from $N{=}3$ to $N{=}5$ and stays below the $20\%$ practitioner threshold for $N=5$ at moderate OOD.}
    \label{fig:fewshot_appendix}
\end{figure}

\begin{table}[h]
\centering
\caption{Few-shot budget prediction. $K_{0.95}$ is the true budget; subsequent columns report the relative error of $\hat K_{0.95}$.}
\label{tab:fewshot_full}
\begin{tabular}{lcccc}
\toprule
Regime & $K_{0.95}$ & $N{=}3$ & $N{=}4$ & $N{=}5$ \\
\midrule
OOD ($2\times$) & 43.3 & 36.7\% & 21.8\% & \phantom{0}2.9\% \\
OOD ($3\times$) & 45.4 & 39.4\% & 22.7\% & \phantom{0}7.1\% \\
OOD ($4\times$) & 48.4 & 43.2\% & 27.5\% & 12.8\% \\
OOD ($5\times$) & 52.4 & 47.5\% & 35.1\% & 19.2\% \\
\bottomrule
\end{tabular}
\end{table}

\textbf{Practitioner threshold.} Panel (b) marks $20\%$ as a practitioner threshold: errors below this level let the operator round to a standard budget choice without changing the outcome. With $N=5$ probes, all four OOD regimes meet this threshold. When the protocol returns an estimate near a budget boundary, the simplest fallback is to run two or three additional probe steps and refit, which empirically tightens the estimate further at the cost of a few percent of total adaptation budget.

Accuracy improves smoothly with $N$ and degrades gradually with OOD severity rather than collapsing. Independently of probes, $\hat A_\infty$ can be estimated \emph{a priori} from $\sigma_\tau^2$ (Figure~\ref{fig:3}b, $R^2 = 0.94$) using $T_1/T_2$ characterization, providing a zero-shot decision signal. \clearpage

\subsection*{A.9: Disentangling Conditioning from Adaptation}
\label{app:a9}

Because the policy $\pi_\theta$ takes $\xi$ as context input (Section~\ref{sec:setup}), the adaptation gains reported in Sections~\ref{sec:5.2}--\ref{sec:two_qubit} could in principle stem from contextual conditioning rather than gradient-based adaptation. We isolate the two mechanisms by an ablation that crosses $\xi$-access against gradient adaptation, with results in Figure~\ref{fig:xi_ablation} and Table~\ref{tab:ablation}. 
We cross $\xi$-access (none, full, $30\%$ noisy) with adaptation (none, $K{=}30$). All conditions share architecture and optimizer; ``Blind'' rows replace task features with zero, and ``Noisy $\xi$'' uses $\xi(1+\epsilon)$ with $\epsilon \sim \mathcal{N}(0,0.3^2)$, modeling realistic $T_1/T_2$ uncertainty. 
\begin{figure}[ht!]
    \centering
    \includegraphics[width=\linewidth]{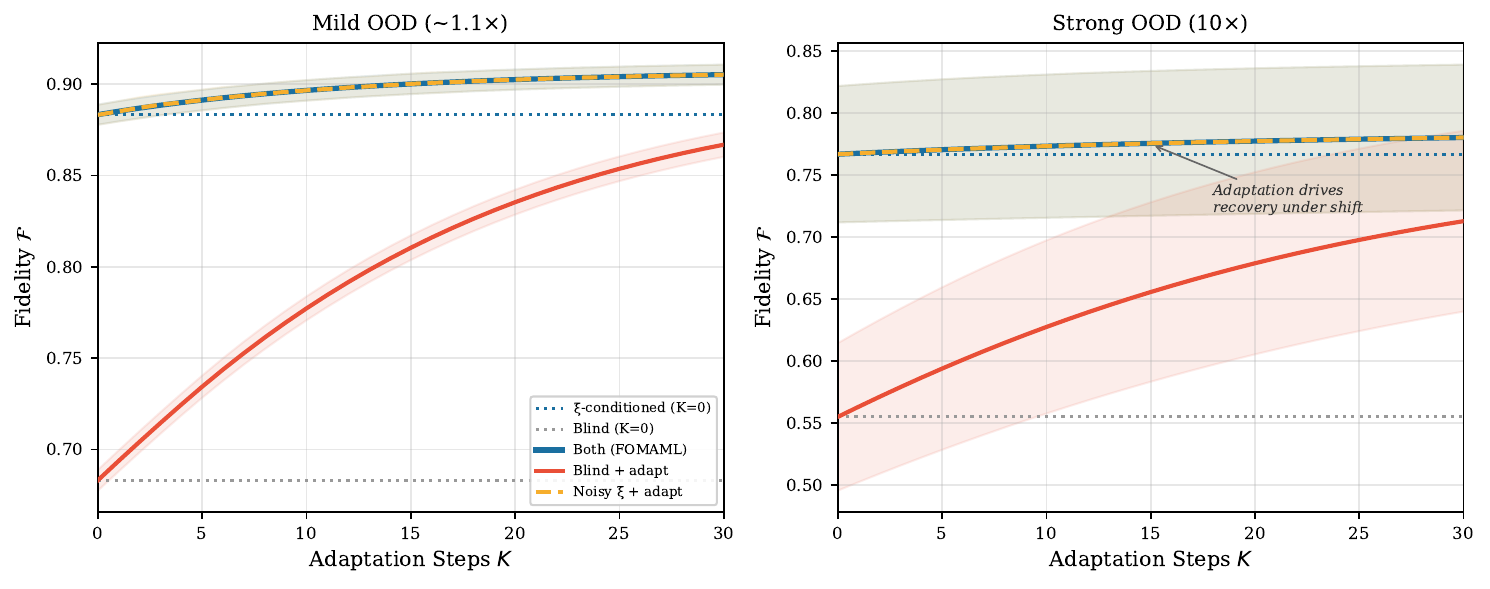}
    \caption{\textbf{Disentangling conditioning from adaptation.} Fidelity trajectories under (left) mild OOD and (right) strong OOD ($10\times$). Dotted: pre-adaptation baselines. Red: blind + adapt. Blue: full FOMAML. Dashed yellow (noisy $\xi$ + adapt) overlaps the full method. Adaptation alone closes the bulk of the gap to the full method under both shifts.}
    \label{fig:xi_ablation}
\end{figure}

\begin{table}[ht!]
\centering
\caption{Disentangling conditioning from adaptation. Adaptation alone (no $\xi$) contributes $+0.16$--$0.18$ over the blind baseline; noisy $\xi$ matches the full method.}
\label{tab:ablation}
\begin{tabular}{lcccc}
\toprule
Method & $\xi$ & Adapt & Mild OOD & Strong OOD \\
\midrule
Blind                  & --     & --        & 0.683 & 0.555 \\
$\xi$-only             & full   & --        & 0.883 & 0.767 \\
Adapt-only             & --     & $K{=}30$  & 0.867 & 0.713 \\
Noisy $\xi$ + adapt    & noisy  & $K{=}30$  & 0.905 & 0.780 \\
Both (FOMAML)          & full   & $K{=}30$  & \textbf{0.905} & \textbf{0.780} \\
\bottomrule
\end{tabular}
\end{table}
\textbf{Three observations.} \emph{(i) Adaptation alone closes most of the gap.} Starting from a blind initialization (no $\xi$ access), $K{=}30$ gradient steps deliver $+0.184$ (mild OOD) and $+0.158$ (strong OOD) over the blind baseline. This is the central finding: adaptation, by itself, recovers the bulk of the achievable fidelity gain without any task-descriptor access. \emph{(ii) Conditioning is complementary, not essential.} $\xi$-only (no gradient steps) raises the pre-adaptation fidelity by $+0.200$  (mild)/ $+0.212$ (strong), but never reaches the full-method fidelity under either shift. Conditioning provides a better starting point; adaptation closes the remaining distance. \emph{(iii) Robustness to imperfect $\xi$.} Replacing the true task descriptor with $\xi(1+\epsilon)$, $\epsilon \sim \mathcal{N}(0, 0.3^2)$, matches the full method to within reporting precision (dashed yellow overlapping blue in Figure~\ref{fig:xi_ablation}), supporting deployment workflows where $T_1/T_2$ characterization is imperfect.

\textbf{Implications.} The central claim of the paper (that adaptation gains scale with task variance via Theorem~\ref{thm:1}) is therefore not an artifact of contextual conditioning. Even stripped of $\xi$ entirely, the gradient steps themselves deliver gap-closing recovery whose magnitude tracks the predicted scaling, ruling out the alternative interpretation that the few-shot protocol merely exploits well-engineered task embeddings.  
\clearpage

\section*{Appendix B: Algorithms} 
We provide pseudocode for the meta-learning procedure (Algorithm~\ref{alg:maml}) and the inner-loop adaptation subroutine (Algorithm~\ref{alg:inner-loop}), complementing the few-shot decision protocol (Algorithm~\ref{alg:fewshot}) in the main text. Gradients $\partial \rho(T)/\partial \theta$ are computed by backpropagation through a 4th-order Runge-Kutta integrator; we use FOMAML~\citep{nichol2018} for the outer loop. 
 \begin{algorithm}[h]
\caption{Meta-Learning for Quantum Control}
\label{alg:maml}
\begin{algorithmic}[1]  
\small 
\REQUIRE Task distribution $P(\xi)$ and Lindbladian generators $f_\xi = \mathcal{L}_\xi$, number of meta-iterations $M$, inner-loop steps $K$, learning rates $\eta_{\text{in}}, \eta_{\text{out}}$
\STATE Initialize meta-parameters (control waveform initialization) $\theta_0$
\FOR{each meta-iteration $m = 1, \ldots, M$}
    \STATE Sample a batch of $N$ tasks $\{\xi_i\}_{i=1}^N \sim P(\xi)$
    \FOR{each task $\xi_i$ in parallel}
        \STATE Initialize task-specific parameters $\theta_i^{(0)} \leftarrow \theta_{m-1}$
        \FOR{inner step $k = 1, \ldots, K$}
            \STATE Propagate Lindblad dynamics under control $u(t; \theta_i^{(k-1)})$:
            \STATE \qquad $\dot{\rho}(t) = \mathcal{L}_\xi(\rho(t); u(t; \theta_i^{(k-1)}))$
            \STATE Compute fidelity loss at final time:
            \STATE \qquad $L(\theta_i^{(k-1)}; \xi_i) = 1 - F(\rho(T), \rho_{\text{target}})$
            \STATE Compute gradient via automatic differentiation:
            \STATE \qquad $\nabla_{\theta_i} L = \frac{\partial L}{\partial \rho(T)} \cdot \frac{\partial \rho(T)}{\partial \theta_i}$
            \STATE Update control parameters (inner adaptation):
            \STATE \qquad $\theta_i^{(k)} \leftarrow \theta_i^{(k-1)} - \eta_{\text{in}} \nabla_{\theta_i} L(\theta_i^{(k-1)}; \xi_i)$
        \ENDFOR
        \STATE Compute post-adaptation loss on the same task: $\tilde{L}(\theta_i^{(K)}; \xi_i)$
    \ENDFOR
  \STATE Compute meta-gradient using FOMAML (treating $\theta_i^{(K)}$ as independent of $\theta_{m-1}$):
  \STATE $g_{\mathrm{meta}} = \frac{1}{N}\sum_{i=1}^{N} \nabla_{\theta} \tilde{L}(\theta; \xi_i) \big|_{\theta = \theta_i^{(K)}}$
    \STATE Update meta-parameters (outer update):
    \STATE \qquad $\theta_m \leftarrow \theta_{m-1} - \eta_{\text{out}} g_{\text{meta}}$ 
\ENDFOR \newline 
   Return: Meta-initialized parameters $\theta_m$ for rapid task-specific adaptation)
\end{algorithmic}
\end{algorithm}

\begin{algorithm}[ht!]
\caption{Inner-loop task adaptation via differentiable quantum control}
\label{alg:inner-loop}
\begin{algorithmic}[1]
\small 
\REQUIRE Task $\xi$ and Lindbladian generator $f_\xi = \mathcal{L}_\xi$), initial control parameters $\theta^{(0)}$ (from meta-init), number of inner steps $K$, step size $\eta_{\text{in}}$
\ENSURE Adapted control parameters $\theta^{(K)}$ specialized to task $\xi$
\FOR{$k = 1, \ldots, K$}
    \STATE \textbf{Forward propagate dynamics:}
    \STATE \quad Initialize density matrix $\rho(0)$
    \STATE \quad Evolve under current control pulses $u(t; \theta^{(k-1)})$:
    \STATE \qquad $\frac{d\rho(t)}{dt} = \mathcal{L}_\xi\left(\rho(t); u(t; \theta^{(k-1)})\right), \quad t \in [0, T]$
    \STATE \quad Obtain final state $\rho(T)$
    \STATE \textbf{Compute task loss:}
    \STATE \quad $L(\theta^{(k-1)}; \xi) = 1 - F(\rho(T), \rho_{\text{target}})$  
    \STATE \textbf{Compute gradient via automatic differentiation:}
    \STATE \quad $\nabla_\theta L(\theta^{(k-1)}; \xi) = \frac{\partial L}{\partial \rho(T)} \cdot \frac{\partial \rho(T)}{\partial \theta^{(k-1)}}$
    \STATE \quad where $\frac{\partial \rho(T)}{\partial \theta}$ is computed by backpropagating through the
    \STATE \quad differentiable ODE integrator (with automatic differentiation)
    \STATE \textbf{Gradient descent update (inner-loop step):}
    \STATE \quad $\theta^{(k)} \leftarrow \theta^{(k-1)} - \eta_{\text{in}} \nabla_\theta L(\theta^{(k-1)}; \xi)$
\ENDFOR \newline 
Return adapted parameters $\theta^{(K)}$
\end{algorithmic}
\end{algorithm}
\newpage

\section*{Appendix C: Extended Theoretical Analysis}
This appendix provides proofs of our main theoretical results.  

\subsection*{C.1: Proofs of Main Lemmas}

\subsubsection*{Proof of Lemma~\ref{lem:1} }

\begin{proof}[Proof Sketch]
Under the controllability and unconstrained-control assumptions, the end-point map from controls to the final unitary propagator is locally surjective onto  $\mathrm{SU}(d)$.

The kinematic analysis of Russell \emph{et~al.}~\cite{russell2017control} shows that for such systems all critical points of the fidelity objective correspond to either global maxima or saddle points, this implies the absence of generically suboptimal local maxima. 

Russell, Rabitz, and Wu~\cite{russell2017control} strengthen this conclusion using  parametric transversality, proving that trap-free landscapes hold for almost all  Hamiltonians and that singular controls do not generically create traps.

\paragraph{Remark (Open-system caveat).}
For Lindblad dynamics the reachable set and end-point map geometry change 
qualitatively, and a general trap-free theorem is not known. 
Structural conditions such as those in Baggio and Viola~\cite{baggio2021dissipative}  
can guarantee attractive behavior for specific objectives (a basin of attraction), but this itself does not imply global trap-free landscapes for a generic open quantum system.  In this work we rely on the rigorous  closed-system result together with continuity arguments and empirical validation  for tasks with weak and smoothly varying dissipation.
\end{proof}

\subsubsection*{Proof of Lemma~\ref{lem:2}}
\begin{proof}[Proof Sketch]
The Lindblad generator acting on density matrix $\rho$ is:
\begin{equation}
\mathcal{L}_\xi[\rho] = -i[H_\xi, \rho] + \sum_j \Gamma_j(\xi) \tilde{\mathcal{D}}[\sigma_j]\rho
\end{equation}
where $\tilde{\mathcal{D}}[\sigma_j]\rho = \sigma_j \rho \sigma_j^\dagger - \frac{1}{2}\{\sigma_j^\dagger \sigma_j, \rho\}$ is the \emph{normalized} dissipator (independent of $\xi$). See Appendix~D for notation and definitions.

\textbf{Case 1: Dissipation rate variation}   

When the Hamiltonian is task-independent ($H_\xi = H_0$) and $\xi = (\Gamma_{\text{deph}}, \Gamma_{\text{relax}})$:
\begin{equation}
\mathcal{L}_\xi[\rho] - \mathcal{L}_{\xi'}[\rho] = \sum_j (\Gamma_j(\xi) - \Gamma_j(\xi')) \tilde{\mathcal{D}}[\sigma_j]\rho
\end{equation}
Since the normalized dissipator $\tilde{\mathcal{D}}[\sigma_j]$ is a bounded linear superoperator with $\|\tilde{\mathcal{D}}[\sigma_j]\| \leq C_j$ for some constant $C_j$ depending on the jump operator structure, we have:
\begin{equation}
\|\mathcal{L}_\xi - \mathcal{L}_{\xi'}\| \leq \sum_j |\Gamma_j - \Gamma_j'| \cdot \|\tilde{\mathcal{D}}[\sigma_j]\| \leq C_L \|\xi - \xi'\|
\end{equation}

\textbf{Case 2: Hamiltonian parameter variation} 

When $\xi$ parameterizes Hamiltonian coefficients (e.g., coupling strength $J$ in $H_\xi = J\sigma_z \otimes \sigma_z + H_{\text{ctrl}}$), the generator difference includes a commutator term:
\begin{equation}
\mathcal{L}_\xi[\rho] - \mathcal{L}_{\xi'}[\rho] = -i[(H_\xi - H_{\xi'}), \rho] + \text{(dissipator terms)}
\end{equation}
The commutator satisfies $\|[A, \rho]\| \leq 2\|A\|\|\rho\|$, giving:
\begin{equation}
\|-i[(H_\xi - H_{\xi'}), \rho]\| \leq 2\|H_\xi - H_{\xi'}\| \leq C_H |\xi - \xi'|
\end{equation}
where $C_H$ depends on the Hamiltonian structure (e.g., $C_H = 2\|\sigma_z \otimes \sigma_z\| = 2$ for coupling variation).

Combining both contributions, we obtain in the generic notation of the main text (setting $f_\xi = \mathcal{L}_\xi$):
\begin{equation}
\|f_{\xi} - f_{\xi'}\| \leq C_L \|\xi - \xi'\|
\end{equation}
where $C_L > 0$ aggregates constants from dissipator and/or Hamiltonian contributions depending on the task parameterization.
\end{proof}

\subsubsection*{Proof of Lemma~\ref{lem:3}}
\label{app:control-separation-proof}

\begin{proof}
Let $\theta^*(\xi)$ denote the optimal control parameters for task $\xi \in \Xi$. 
Fix a reference task $\xi^* \in \Xi$ with optimum $\theta^* = \theta^*(\xi^*)$. Here we assume the following: 
(1)  $L_\xi(\theta)$ is twice continuously differentiable in $(\theta, \xi)$, (2) the Hessian  $H = \nabla^2_{\theta\theta} L_{\xi^*}(\theta^*)$ satisfies  $H \succeq \mu I$ for some $\mu > 0$, and (3) the mixed partial  $M = \nabla^2_{\xi\theta} L_{\xi^*}(\theta^*)$ satisfies  $\sigma_{\min}(M) > 0$ (full column rank), where $\sigma_{\min}(M)$ is the smallest singular value of $M$.

By the implicit function theorem applied to the optimality condition 
$\nabla_\theta L_\xi(\theta^*(\xi)) = 0$, the mapping $\xi \mapsto \theta^*(\xi)$ 
is differentiable with the Jacobian:
\begin{equation}
    \frac{\partial \theta^*}{\partial \xi}\bigg|_{\xi^*} = -H^{-1}M.
\end{equation}
A Taylor expansion around $\xi^*$  leads to the relation: 
\begin{equation}
    \theta^*(\xi) - \theta^*(\xi^*) = -H^{-1}M(\xi - \xi^*) + O(\|\xi - \xi^*\|^2).
\end{equation}

Since $\dim(\theta) \geq \dim(\xi)$, the matrix $J = -H^{-1}M$ has dimensions $\dim(\theta) \times \dim(\xi)$ and can have full column rank. The condition $\sigma_{\min}(M) > 0$ ensures $M$ has full column rank, and since $H \succeq \mu I$ is invertible, $J = -H^{-1}M$ also has full column rank.

The result follows for $\|\delta\xi\|$ sufficiently small that the linear 
term dominates.
\end{proof}

\begin{remark}[\emph{Verification for quantum control}]
\label{rem:quantum_verification}
For quantum gate synthesis with task parameters $\xi = (\Gamma_{\text{deph}}, \Gamma_{\text{relax}})$ ($\dim(\xi) = 2$) and control parameters $\theta$ representing pulse amplitudes ($\dim(\theta) \geq 2$ in all experiments), the dimension condition is satisfied. The task sensitivity condition $\sigma_{\min}(M) > 0$ holds generically under 
controllability. Physically, distinct noise environments require distinct  optimal pulses, and this dependence is first-order except at measure-zero 
degenerate points. We verify $C_{\mathrm{sep}} > 0$ numerically in 
Figure~\ref{fig:2}(c).
\end{remark}

\subsection*{C.2: Proof of Main Theorem (Theorem~\ref{thm:1})}
\phantomsection
\label{app:proof}

We prove the adaptation gap scaling law $G_K  \geq A_\infty(1 - e^{-\beta K})$.

\subsubsection*{Setup} 
The adaptation gap is defined as:
\[
G_K = \mathbb{E}_{\xi \sim P}[L_\xi(\theta_0) - L_\xi(\theta_K(\xi))],
\]
where $\theta_0$ is the meta-initialization and $\theta_K(\xi)$ is the adapted policy after $K$ gradient steps on task $\xi$.

\subsubsection*{Part A: PL Convergence.}
The functional form of the gap depends on the loss surface obeying the PL condition locally near the task optima. We state the proof of this fact in terms of loss surfaces satisfying the weaker Kurdyka-{\L}ojasiewicz (KL) condition of which the PL condition is a special case. Formally we assume here that for some $\alpha \geq 1/2$ the loss surface satisfies
\begin{equation}\label{eq:KL}
||\nabla_\theta L_\xi(\theta)||^2 \geq \tilde{c}(L_\xi(\theta) - L_\xi^*)^{2\alpha}
\end{equation}
in a neighborhood containing the optimum $\theta^*_\xi$ and $\theta_0$. Note that if $\alpha = 1/2$ this is equivalent to the PL condition.

Gradient descent yields the standard recurrence relation for L-smooth functions,
\begin{equation}
    L_\xi(\theta_{k+1})\leq L_\xi(\theta_k) - \eta||\nabla L_\xi(\theta_k)||^2 + \frac{L\eta^2}{2}||\nabla L_\xi(\theta_k)||^2.
\end{equation}
By the KL condition and assuming that $\eta \leq 1/L$ (where L is the Lipschitz constant of $\nabla L_\xi$),
\begin{equation}
L_\xi(\theta_{k+1}) - L_\xi^* \leq L_\xi(\theta_k) - L_\xi^* - \frac{\eta}{2} \tilde{c}(L_\xi(\theta_k) - L_\xi^*)^{2\alpha}.
\end{equation}
Treating $k$ as continuous and taking the continuous limit gives the associated ODE
\begin{equation}
\Delta_{k+1}\leq \Delta_k - \gamma \Delta_k^{2\alpha},
\end{equation}
where $\gamma = \eta \tilde{c} / 2$.
\begin{equation}
\frac{d\Delta}{d t} = -\gamma \Delta(t)^{2\alpha}.
\end{equation}
Solving this ordinary differential equation gives
\begin{equation}
\Delta(t) = \begin{cases}
    [\Delta_0^{1-2\alpha} - (1-2\alpha)\gamma t]^{\frac{-1}{2\alpha-1}} & \alpha > 1/2 \\
    \exp(-\gamma t) & \alpha = 1/2
\end{cases}.
\end{equation}
For $\alpha = 1/2$ we have the special case of the PL condition and we have the inequality 
\begin{equation}\label{eq:kl-bound}
L_\xi(\theta_K) - L^*_\xi \leq \exp(-\beta K)[L_\xi(\theta_0) - L^*_\xi],
\end{equation}
where $\beta = \eta\mu$.

Though not used in the main body of this paper, one can also derive a bound on the rate of gradient descent convergence under the more general KL condition and arrive at
\begin{equation}
L_\xi(\theta_k) - L_\xi^* \leq 
\bigl((L_\xi(\theta_0) - L_\xi^*)^{1-2\alpha} - (1-2\alpha)\gamma k\bigr)^{\frac{-1}{2\alpha-1}}.
\end{equation}

\subsubsection*{Part B: Gap Decomposition.}
We decompose the adaptation gap into the initial suboptimality minus the residual suboptimality after $K$ steps, then apply the PL convergence bound to obtain the exponential saturation form. 
\begin{align}\label{eq:exp-pl-gap}
L_\xi(\theta_0) - L_\xi(\theta_K) 
&= [L_\xi(\theta_0) - L^*_\xi] - [L_\xi(\theta_K) - L^*_\xi] \\
&\geq [L_\xi(\theta_0) - L^*_\xi](1 - e^{-\beta K}).
\end{align}

\subsubsection*{Part C: Expectation.}
Define $\varepsilon_{\mathrm{init}} := \mathbb{E}_{\xi}[L_\xi(\theta_0) - L^*_\xi]$. Taking expectation values:
\begin{equation}
G_K \geq \varepsilon_{\mathrm{init}}(1 - e^{-\beta K}).
\end{equation}
For MAML-optimized $\theta_0$, this bound is approximately tight.

\subsubsection*{Part D: Connecting to Task Variance.}

We establish $A_\infty \propto \sigma^2_\tau$ in two steps: (i) showing that MAML and average-task solutions coincide under appropriate conditions, and (ii) relating the initial suboptimality at this solution to task variance. 

\paragraph{Step 1: MAML and average-task solutions coincide under symmetry.} 
Consider quadratic task losses of the form:
\begin{equation}
    L_\xi(\theta) = \frac{1}{2}(\theta - \theta^*(\xi))^\top H (\theta - \theta^*(\xi))
    \label{eq:quadratic_loss}
\end{equation}
with Hessian $H \succ 0$ and linear optima map $\theta^*(\xi) = A\xi + b$. 

The task-independent Hessian is a local approximation: in principle $H = H(\xi)$, but for tasks within a bounded neighborhood of the mean $\bar{\xi}$, we have $H(\xi) = H(\bar{\xi}) + O(\|\xi - \bar{\xi}\|)$. When task variance $\sigma_\tau^2$ is moderate, this first-order correction contributes $O(\sigma_\tau^3)$ to the final bound, which is dominated by the leading $O(\sigma_\tau^2)$ term. This approximation is consistent with our empirical observation that the PL constant $\mu$ varies weakly across tasks within the training distribution (Figure~\ref{fig:2}(a) shows similar slopes for different tasks in the adaptation regime, with $\mu \approx 0.03$ for both).

Previous work~\citep{arnold2021maml} has proved for 1D linear regression that the MAML solution equals the average-task solution under symmetric task distributions. We extend their argument to our setting.\footnote{The equivalence is established for $K=1$; for $K > 1$ with small $\eta$, the MAML objective remains approximately minimized at the average-task solution, as higher-order corrections scale with $O(\eta^2)$.}

The MAML objective with one inner-loop step is:
\begin{equation}
\mathcal{L}^{\mathrm{MAML}}(\theta) = \mathbb{E}_\xi\left[L_\xi(\theta - \eta \nabla_\theta L_\xi(\theta))\right]
\end{equation}
The gradient of the quadratic loss is $\nabla_\theta L_\xi(\theta) = H(\theta - \theta^*(\xi))$, so the inner-loop update gives:
\begin{equation}
\theta' = \theta - \eta H(\theta - \theta^*(\xi)) = (I - \eta H)\theta + \eta H\theta^*(\xi)
\end{equation}
The post-adaptation residual is after $K=1$ adaptation steps is:
\begin{align}
\theta' - \theta^*(\xi) &= (I - \eta H)\theta + \eta H\theta^*(\xi) - \theta^*(\xi) \notag \\
&= (I - \eta H)\theta - (I - \eta H)\theta^*(\xi) \notag \\
&= (I - \eta H)(\theta - \theta^*(\xi))
\end{align}
Substituting into the loss:
\begin{align}
L_\xi(\theta') &= \frac{1}{2}(\theta' - \theta^*(\xi))^\top H (\theta' - \theta^*(\xi)) \notag \\
&= \frac{1}{2}\left[(I - \eta H)(\theta - \theta^*(\xi))\right]^\top H \left[(I - \eta H)(\theta - \theta^*(\xi))\right] \notag \\
&= \frac{1}{2}(\theta - \theta^*(\xi))^\top M (\theta - \theta^*(\xi))
\end{align}
where $M = (I - \eta H)^\top H (I - \eta H)$.

To compute $\mathcal{L}^{\mathrm{MAML}}(\theta) = \mathbb{E}_\xi[L_\xi(\theta')]$, let $\bar{\theta}^* = \mathbb{E}_\xi[\theta^*(\xi)]$ and decompose:
\begin{equation}
\theta - \theta^*(\xi) = (\theta - \bar{\theta}^*) + (\bar{\theta}^* - \theta^*(\xi))
\end{equation}
Expanding the quadratic form:
\begin{align}
(\theta - \theta^*(\xi))^\top M (\theta - \theta^*(\xi)) &= (\theta - \bar{\theta}^*)^\top M (\theta - \bar{\theta}^*) \notag \\
&\quad + 2(\theta - \bar{\theta}^*)^\top M (\bar{\theta}^* - \theta^*(\xi)) \notag \\
&\quad + (\bar{\theta}^* - \theta^*(\xi))^\top M (\bar{\theta}^* - \theta^*(\xi))
\end{align}

Taking expectations term by term:

\textit{Term 1:} $(\theta - \bar{\theta}^*)^\top M (\theta - \bar{\theta}^*)$ contains no dependence on $\xi$, so:
\begin{equation}
\mathbb{E}_\xi\left[(\theta - \bar{\theta}^*)^\top M (\theta - \bar{\theta}^*)\right] = (\theta - \bar{\theta}^*)^\top M (\theta - \bar{\theta}^*)
\end{equation}

\textit{Term 2:} Pulling out constants and using $\bar{\theta}^* = \mathbb{E}_\xi[\theta^*(\xi)]$:
\begin{equation}
\mathbb{E}_\xi\left[2(\theta - \bar{\theta}^*)^\top M (\bar{\theta}^* - \theta^*(\xi))\right] = 2(\theta - \bar{\theta}^*)^\top M \cdot \mathbb{E}_\xi\left[\bar{\theta}^* - \theta^*(\xi)\right] = 0
\end{equation}

\textit{Term 3:} Using the trace identity $a^\top B a = \mathrm{Tr}(B a a^\top)$ and the definition of covariance:
\begin{align}
\mathbb{E}_\xi\left[(\bar{\theta}^* - \theta^*(\xi))^\top M (\bar{\theta}^* - \theta^*(\xi))\right] &= \mathrm{tr}\left(M \cdot \mathbb{E}_\xi\left[(\theta^*(\xi) - \bar{\theta}^*)(\theta^*(\xi) - \bar{\theta}^*)^\top\right]\right) \notag \\
&= \mathrm{tr}(M \Sigma_{\theta^*})
\end{align}

Combining and multiplying by $\frac{1}{2}$:
\begin{equation}
\mathcal{L}^{\mathrm{MAML}}(\theta) = \frac{1}{2}(\theta - \bar{\theta}^*)^\top M (\theta - \bar{\theta}^*) + \frac{1}{2}\mathrm{tr}(M \Sigma_{\theta^*})
\end{equation}

The second term is constant in $\theta$, so:
\begin{equation}
\theta_0^{\mathrm{MAML}} = \bar{\theta}^* = \mathbb{E}_\xi[\theta^*(\xi)]
\end{equation}
An identical calculation for the average-task objective $\mathbb{E}_\xi[L_\xi(\theta)]$ (setting $\eta = 0$, hence $M = H$) yields $\theta_{\mathrm{avg}} = \mathbb{E}_\xi[\theta^*(\xi)]$. Thus $\theta_0^{\mathrm{MAML}} = \theta_{\mathrm{avg}}$ under the quadratic-linear structure, independent of learning rate $\eta$ (provided $\eta < 1/\|H\|$ for stability).

\paragraph{Step 2: Initial suboptimality scales with task variance.}

At $\theta_0 = \mathbb{E}_\xi[\theta^*(\xi)]$, the expected initial suboptimality is:
\begin{equation}
A_\infty = \mathbb{E}_\xi[L_\xi(\theta_0) - L^*_\xi] = \frac{1}{2}\mathrm{tr}(H \Sigma_{\theta^*})
\end{equation}
By the linear optima map $\theta^*(\xi) = A\xi + b$ (validated empirically in \ref{fig:2}(c)):
\begin{equation}
\Sigma_{\theta^*} = A \cdot \mathrm{Cov}(\xi) \cdot A^\top
\end{equation}
Substituting:
\begin{equation}
A_\infty = \frac{1}{2}\mathrm{tr}(A^\top H A \cdot \mathrm{Cov}(\xi))
\end{equation}
For isotropic task variance $\mathrm{Cov}(\xi) = \sigma^2_\tau I / \mathrm{dim}(\xi)$:
\begin{equation}
A_\infty = \frac{\sigma^2_\tau}{2 \cdot \mathrm{dim}(\xi)} \mathrm{tr}(A^\top H A) = c \cdot \sigma^2_\tau
\end{equation}
where $c = \mathrm{tr}(A^\top H A) / (2 \cdot \mathrm{dim}(\xi))$ depends on loss geometry ($H$) and control separation ($A$).

\paragraph{Assumptions.} This derivation relies on: (i) quadratic loss approximation near optima, (ii) approximately task-independent Hessian, and (iii) linear dependence of optima on task parameters. 

Combining Parts A--D:
\begin{equation}
\boxed{G_K \geq c\,\sigma^2_\tau(1 - e^{-\beta K}), \quad \beta = \eta\mu_{\min}.}
\end{equation}
Figure~\ref{fig:3}(b) validates $A_\infty \propto \sigma^2_\tau$ with $R^2 = 0.94$ for FOMAML-trained initializations.

Note that a weaker result can be arrived at under the KL condition by replacing the bound in \eqref{eq:exp-pl-gap} with the bound derived from \eqref{eq:kl-bound}. In this case, the remainder of the argument remains unchanged and you arrive at a bound on the gap of the form
\begin{equation}
\boxed{G_k \geq c\sigma^2_\tau (1 - \mathcal{O}((\gamma k)^{\frac{-1}{2\alpha-1}})), \quad \gamma = \eta \tilde{c}/2}
\end{equation}
\hfill $\square$

\subsection*{C.3: Empirical Verification} 
Table~\ref{tab:assumption_verification} summarizes the empirical validation of each theoretical assumption underlying our scaling law. For each assumption, we specify the verification method, corresponding figure, and quantitative result.
\begin{table}[h]
\centering
\caption{Empirical verification of theoretical assumptions from Section~\ref{sec:4}.}
\label{tab:assumption_verification}
\begin{tabular}{llcc}
\toprule
\textbf{Assumption} & \textbf{Verification Method} & \textbf{Figure / Table} & \textbf{Result} \\
\midrule
PL (single task, Asm.~\ref{ass:3}) & Gradient norm vs.\ loss gap & Figure~\ref{fig:2}(a) & $\mu = 0.03$ \\
PL breadth (Asm.~\ref{ass:3}) & Aggregate $\mu$, ID / mild / strong OOD & Figure~\ref{fig:pl_breadth} & $R^2 > 0.95$ across all violations \\
Lipschitz (Lemma~\ref{lem:2}) & Lindbladian distance scaling & Figure~\ref{fig:2}(b) & $C_L = 2.8$ \\
Control separation (Lemma~\ref{lem:3}) & Optimal control distance & Figure~\ref{fig:2}(c) & $R^2 = 0.98$ \\
$\sigma^2_\tau \to \sigma^2_L$ & Loss variance regression & Figure~\ref{fig:a3} & $R^2 = 0.987$ \\
$\beta \propto \eta$ (Thm.~\ref{thm:1}) & Learning rate ablation & Figure~\ref{fig:a4}(b) & Linear for $\eta \leq 2 \times 10^{-2}$ \\
\bottomrule
\end{tabular}
\end{table}

\newpage
\section*{Appendix D: Quantum Control Primer}
\label{app:D}
This appendix provides background on quantum control for readers unfamiliar 
with the physics and specifies the Hamiltonians and parameters used in our 
experiments. Readers comfortable with Lindblad dynamics may skip to 
Section~D.5.

\subsection*{D.1 Quantum States and Gates}

A \emph{qubit} is a two-level quantum system with state 
$|\psi\rangle = \alpha|0\rangle + \beta|1\rangle$ where 
$|\alpha|^2 + |\beta|^2 = 1$. The state can be visualized as a point on the 
Bloch sphere, with $|0\rangle$ and $|1\rangle$ at the poles.

When subject to noise or partial measurement, pure states generalize to 
\emph{density matrices} $\rho$, which are positive semidefinite Hermitian 
operators with $\mathrm{Tr}(\rho) = 1$. A pure state $|\psi\rangle$ corresponds 
to $\rho = |\psi\rangle\langle\psi|$ with $\mathrm{Tr}(\rho^2) = 1$, while 
mixed states have $\mathrm{Tr}(\rho^2) < 1$.

\emph{Quantum gates} are unitary transformations acting on qubits. The 
single-qubit Pauli matrices are:
\begin{equation}
    \sigma_x = \begin{pmatrix} 0 & 1 \\ 1 & 0 \end{pmatrix}, \quad
    \sigma_y = \begin{pmatrix} 0 & -i \\ i & 0 \end{pmatrix}, \quad
    \sigma_z = \begin{pmatrix} 1 & 0 \\ 0 & -1 \end{pmatrix}
\end{equation}
The $X$ gate (Pauli-$\sigma_x$) acts as a NOT operation, mapping 
$|0\rangle \leftrightarrow |1\rangle$.

The \emph{quantum state fidelity} between density matrices $\rho$ and $\sigma$ is:
\begin{equation}
    \mathcal{F}(\rho, \sigma) = \left[\mathrm{Tr}\sqrt{\sqrt{\rho}\,\sigma\,\sqrt{\rho}}\right]^2
    \label{eq:state_fidelity}
\end{equation}
For a pure target state $\sigma = |\psi\rangle\langle\psi|$, this simplifies to 
$\mathcal{F} = \langle\psi|\rho|\psi\rangle$. Our loss function is the 
\emph{infidelity} $\mathcal{L} = 1 - \mathcal{F}$.

A foundational result in quantum computing is that single-qubit rotations 
combined with any entangling two-qubit gate (e.g., CZ or CNOT) form a 
\emph{universal gate set}: any quantum algorithm can be decomposed into 
these primitives~\citep{barenco1995elementary}. This is 
why calibrating single- and two-qubit gates (the focus of our 
experiments) addresses the complete calibration problem for gate-based 
quantum computers.

\subsection*{D.2 Open Quantum Systems and the Lindblad Equation}

Real quantum processors interact with their environment, causing 
\emph{decoherence}--- the loss of quantum information. The Lindblad master 
equation governs this dissipative evolution:
\begin{equation}
    \dot{\rho}(t) = -i[H(t), \rho] + \sum_j \mathcal{D}[L_j]\rho
    \label{eq:lindblad-primer}
\end{equation}
where the \emph{dissipator} superoperator is, 
\begin{equation}
    \mathcal{D}[L]\rho = L\rho L^\dagger - \frac{1}{2}\left(L^\dagger L \rho + \rho L^\dagger L\right). 
\end{equation}
Here $H(t)$ is the (possibly time-dependent) system Hamiltonian and $\{L_j\}$ 
are \emph{Lindblad operators} (also called jump operators) that characterize 
different noise channels. 

\textbf{Notation convention.} Throughout this work, we use two equivalent formulations: the main text factors rates as $\Gamma_j \mathcal{D}[\tilde{L}_j]$ with normalized jump operators $\tilde{L}_j$, while Appendix~D uses $\mathcal{D}[L_j]$ with $L_j = \sqrt{\Gamma_j}\tilde{L}_j$ absorbing the rates.\newline\newline 
For superconducting qubits, the two dominant noise channels are:
\begin{itemize}
    \item \textbf{Relaxation} ($T_1$ decay): Energy dissipation from 
    $|1\rangle \to |0\rangle$, characterized by the lowering operator 
    $L_{\mathrm{relax}} = \sqrt{\Gamma_{\mathrm{relax}}}\,\sigma_-$ where 
    $\sigma_- = |0\rangle\langle 1|$. The rate $\Gamma_{\mathrm{relax}} = 1/T_1$.
    
    \item \textbf{Pure dephasing} ($T_\phi$ decay): Loss of phase coherence 
    without energy exchange, characterized by 
    $L_{\mathrm{deph}} = \sqrt{\Gamma_{\mathrm{deph}}/2}\,\sigma_z$. 
    The factor of $1/2$ arises from the convention that $\sigma_z$ has 
    eigenvalues $\pm 1$.
\end{itemize}
The total decoherence time satisfies $1/T_2 = 1/(2T_1) + 1/T_\phi$.

\paragraph{Connection to noise power spectral density.} The dissipation rates 
$\Gamma_j$ arise from the qubit's coupling to environmental fluctuations. 
For a noise source with power spectral density $S(\omega)$, the effective 
rate is:
\begin{equation}
    \Gamma = \int_0^\infty S(\omega)\, |F(\omega)|^2 \, d\omega
\end{equation}
where $F(\omega)$ is the \emph{filter function} encoding the control 
sequence's sensitivity to noise at frequency $\omega$~\citep{biercuk2011dynamical}. 
This provides the physical basis for our task parameterization: different 
devices have different $S(\omega)$, leading to different $\Gamma$ values.

\subsection*{D.3 Single-Qubit Gate: $X$ Gate}

For single-qubit experiments, we implement an $X$ gate (bit-flip, equivalent 
to a $\pi$ rotation about the $x$-axis of the Bloch sphere). The control 
Hamiltonian is:
\begin{equation}
    H(t) = \frac{\omega_q}{2}\sigma_z + u_x(t)\sigma_x + u_y(t)\sigma_y
    \label{eq:single_qubit_ham}
\end{equation}
where $\omega_q$ is the qubit frequency (drift strength), and $u_x(t), u_y(t)$ 
are time-dependent control fields parameterized by our neural network policy 
$\pi_\theta$. The drift term $\frac{\omega_q}{2}\sigma_z$ represents the 
qubit's natural precession, while the control terms drive rotations about 
the $x$ and $y$ axes.

The target unitary is $U_{\mathrm{target}} = X = \sigma_x$, which maps 
$|0\rangle \to |1\rangle$ and $|1\rangle \to |0\rangle$. Starting from 
initial state $\rho_0 = |0\rangle\langle 0|$, the target final state is 
$\rho_{\mathrm{target}} = |1\rangle\langle 1|$.

\paragraph{Lindblad operators.} The noise model includes relaxation and 
dephasing channels:
\begin{equation}
    L_1 = \sqrt{\Gamma_{\mathrm{relax}}}\,\sigma_-, \qquad
    L_2 = \sqrt{\Gamma_{\mathrm{deph}}/2}\,\sigma_z
\end{equation}
where $\sigma_- = |0\rangle\langle 1|$ is the lowering operator. The rates 
$\Gamma_{\mathrm{relax}}$ and $\Gamma_{\mathrm{deph}}$ vary across tasks 
according to our task distribution $p(\xi)$.

\subsection*{D.4 Two-Qubit Gate: CZ Gate}

For two-qubit experiments, we implement a controlled-Z (CZ) gate, which 
applies a $\pi$ phase to the $|11\rangle$ state while leaving other 
computational basis states unchanged:
\begin{equation}
    U_{\mathrm{CZ}} = \mathrm{diag}(1, 1, 1, -1) = 
    |00\rangle\langle 00| + |01\rangle\langle 01| + 
    |10\rangle\langle 10| - |11\rangle\langle 11|
\end{equation}
The CZ gate is locally equivalent to CNOT (they differ by single-qubit 
rotations) and is native to many superconducting architectures.

\paragraph{Hamiltonian.} We model two qubits with a static ZZ coupling and 
individual single-qubit drives:
\begin{equation}
    H(t) = J \, \sigma_{z,1} \otimes \sigma_{z,1} + \sum_{j=1}^{2} \left[ 
    u_{x,j}(t)\sigma_{x,j} + u_{y,j}(t)\sigma_{y,j} + u_{z,j}(t)\sigma_{z,j} \right]
    \label{eq:two_qubit_ham}
\end{equation}
where $J$ is the ZZ coupling strength and $\sigma_{\alpha, j}$ denotes Pauli 
operator $\alpha \in \{x,y,z\}$ acting on qubit $j$ (tensored with identity 
on the other qubit). The six control fields 
$\{u_{x,1}, u_{y,1}, u_{x,2}, u_{y,2}, u_{z,1}, u_{z,2}\}$ are parameterized 
by our neural network policy.
\begin{table}[h]
\centering
\caption{Single-qubit $X$ gate simulation parameters}
\label{tab:single_qubit_params}
\begin{tabular}{lcc}
\toprule
\textbf{Parameter} & \textbf{Symbol} & \textbf{Value} \\
\midrule
Qubit frequency (drift) & $\omega_q$ & 1.0 (dimensionless) \\
Evolution time & $T$ & 1.0 \\
Number of control segments & $N_{T}$ & 20 \\
Number of control fields & --- & 2 ($u_x, u_y$) \\
Integration method & --- & RK4 \\
Integration time step & $\Delta t$ & 0.005 \\
Max control amplitude & $|u|_{\max}$ & 10.0 \\  
\bottomrule
\end{tabular}
\end{table}

\paragraph{Physical intuition.} The static ZZ coupling naturally generates a 
relative phase between computational basis states. Under $H_0 = J\sigma_{z,1}\sigma_{z,2}$, 
the eigenvalues are $+J$ for $|00\rangle, |11\rangle$ and $-J$ for 
$|01\rangle, |10\rangle$. After time $T$, this produces phases 
$e^{-iJT}$ and $e^{+iJT}$ respectively. For gate time $T = \pi/(2J)$, the relative phase between $|11\rangle$ and the other states equals $\pi$, yielding the CZ gate up to single-qubit Z rotations. (Our experiments use $J=2.0$, giving $T \approx 0.785$.)T he X, Y, and Z drives 
provide the degrees of freedom to correct these local phases and optimize 
fidelity under noise.

\paragraph{Noise model.} Each qubit experiences independent dephasing and 
relaxation:
\begin{equation}
    \dot{\rho} = -i[H(t), \rho] + \sum_{j=1}^{2} \left[ 
    \frac{\Gamma_{\mathrm{deph},j}}{2} \mathcal{D}[\sigma_{z,j}]\rho + 
    \Gamma_{\mathrm{relax},j} \mathcal{D}[\sigma_{-,j}]\rho \right]
\end{equation}
where $\sigma_{-,j} = |0\rangle_j\langle 1|$ is the lowering operator for 
qubit $j$. In our task distribution, the two qubits have correlated but 
not identical noise rates (within $\pm 20\%$ of each other), reflecting 
realistic device variation.

\paragraph{Fidelity computation.} For two-qubit gates, single-state fidelity 
is insufficient (e.g. a control sequence might work for $|00\rangle$ but fail 
for superposition states). We compute average gate fidelity using 12 
informationally complete input states that probe the CZ phase structure according to the following: 
\begin{itemize}
    \item X-basis products: $|{+}{+}\rangle, |{+}{-}\rangle, |{-}{+}\rangle, |{-}{-}\rangle$
    \item Y-basis products: $|{+i},{+i}\rangle, |{+i},{-i}\rangle$
    \item Mixed basis: $|1{+}\rangle, |1{-}\rangle, |{+}1\rangle, |{-}1\rangle$
    \item Computational basis: $|00\rangle, |11\rangle$
\end{itemize}
where $|{\pm}\rangle = (|0\rangle \pm |1\rangle)/\sqrt{2}$ and 
$|{\pm i}\rangle = (|0\rangle \pm i|1\rangle)/\sqrt{2}$. For each input 
state $|\psi_k\rangle$, we evolve $\rho_0 = |\psi_k\rangle\langle\psi_k|$ 
and compute 
\begin{equation}
    \mathcal{F} = \frac{1}{12}\sum_{k=1}^{12} 
    \langle\psi_k| U_{\mathrm{CZ}}^\dagger \rho(T) U_{\mathrm{CZ}} |\psi_k\rangle. 
\end{equation}

\begin{table}[h]
\centering
\caption{Two-qubit CZ gate simulation parameters}
\label{tab:two_qubit_params}
\begin{tabular}{lcc}
\toprule
\textbf{Parameter} & \textbf{Symbol} & \textbf{Value} \\
\midrule
ZZ coupling strength & $J$ & 2.0 (dimensionless) \\
Ideal gate time & $T$ & $\pi/(2J) \approx 0.785$ \\
Number of control segments & $N_{T}$ & 20--30 \\
Number of control fields & --- & 6 ($u_x, u_y, u_z$ per qubit) \\
Max control amplitude & $|u|_{\max}$ & $\pi$ \\
Integration time step & $\Delta t$ & 0.01 \\ 
\bottomrule
\end{tabular}
\end{table}
\newpage 

\subsection*{D.5 Differentiable Quantum Simulation}

A key technical contribution enabling our experiments is a fully 
differentiable Lindblad simulator implemented in PyTorch. This allows 
gradients to flow from the fidelity loss through the entire quantum 
simulation back to the policy parameters, enabling end-to-end meta-learning.

\paragraph{Integration scheme.} We use a 4th-order Runge-Kutta (RK4) 
integration implementation shown below   
\begin{align}
    k_1 &= \mathcal{L}[\rho_n] \\
    k_2 &= \mathcal{L}[\rho_n + \tfrac{\Delta t}{2} k_1] \\
    k_3 &= \mathcal{L}[\rho_n + \tfrac{\Delta t}{2} k_2] \\
    k_4 &= \mathcal{L}[\rho_n + \Delta t \, k_3] \\
    \rho_{n+1} &= \rho_n + \frac{\Delta t}{6}(k_1 + 2k_2 + 2k_3 + k_4)
\end{align}
where $\mathcal{L}[\rho] = -i[H,\rho] + \sum_j \mathcal{D}[L_j]\rho$ is the 
Lindbladian superoperator. Each operation is implemented using standard 
PyTorch tensor operations, ensuring automatic differentiation compatibility.

\paragraph{Piecewise-constant controls.} The control sequence is discretized 
into $N_{\mathrm{seg}}$ segments of duration $\Delta T = T/N_{\mathrm{seg}}$. 
Within each segment, the control amplitudes $u_k$ are constant, and we take 
multiple RK4 substeps (typically $\Delta T / \Delta t \approx 5$--$10$) to 
maintain accuracy.

\subsection*{D.6 Gradient-Based Quantum Control}

The \emph{control landscape} is the function mapping pulse parameters 
$\theta$ to fidelity $\mathcal{F}(\theta)$. A fundamental result in quantum 
control theory is that for \emph{controllable} systems, where the Lie 
algebra generated by control Hamiltonians spans $\mathfrak{su}(d)$. This 
landscape is generically \emph{trap-free}: all local optima are global 
optima~\citep{russell2017control}.

\paragraph{Controllability condition.} A quantum system with static 
Hamiltonian $H_0$ and control Hamiltonians $\{H_1, \ldots, H_m\}$ is 
controllable if:
\begin{equation}
    \mathrm{Lie}\{iH_0, iH_1, \ldots, iH_m\} = \mathfrak{su}(d)
\end{equation}
where $\mathrm{Lie}\{\cdot\}$ denotes the Lie algebra generated by nested 
commutators. For our single-qubit system with $H_0 \propto \sigma_z$ and 
controls $\{\sigma_x, \sigma_y\}$, this is satisfied since 
$[\sigma_x, \sigma_y] \propto \sigma_z$.

\paragraph{GRAPE.} Gradient Ascent Pulse Engineering~\citep{khaneja2005optimal} 
exploits this favorable geometry by computing $\nabla_\theta \mathcal{F}$ 
via the chain rule through the quantum dynamics and applying gradient 
ascent. Our differentiable simulator implements the same principle using 
automatic differentiation, but extends it to the meta-learning setting 
where gradients must flow through \emph{multiple} task adaptations.

\paragraph{Open-system caveat.} The trap-free guarantee holds rigorously 
for closed (unitary) systems. For open systems with Lindblad dissipation, 
the landscape geometry is more complex and trap-free guarantees are not 
generally known. However, for weak dissipation ($\Gamma T \ll 1$), the 
landscape remains approximately trap-free, and we verify the local 
Polyak-\L{}ojasiewicz condition empirically (Figure \ref{fig:2}(a) in the main text).

\end{document}